\tikzset{ 
        myNode/.style={fill, circle, inner sep = 0, outer sep = 0, minimum size=5pt},
        myTip /.tip = {Latex[length=7pt, width=0pt 4]},
        myEdge/.style={thick}
        }
\spnewtheorem*{open}{Open Question}{\bfseries}{\normalfont}
\newcommand{\RR}{\mathbb{R}}
\newcommand{\QQ}{\mathbb{Q}}
\newcommand{\PP}{\mathbb{P}}
\newcommand{\CC}{\mathbb{C}}
\newcommand{\vect}[1]{{\bf{#1}}}
\newcommand{\ie}{{\em i.e.}}
\newcommand{\eg}{{\em e.g.}}
\newcommand{\@chapapp}{\relax}%
\begin{document}% \renewcommand\thelinenumber{\color[rgb]{0.2,0.5,0.8}\normalfont\sffamily\scriptsize\arabic{linenumber}\color[rgb]{0,0,0}}
% \renewcommand\makeLineNumber {\hss\thelinenumber\ \hspace{6mm}
% \rlap{\hskip\textwidth\ \hspace{6.5mm}\thelinenumber}}
% \linenumbers
\pagestyle{headings}
\mainmatter
\def\ECCV18SubNumber{1983}  % Insert your submission number here

\title{On the Solvability of Viewing Graphs} % Replace with your title

% \titlerunning{ECCV-18 submission ID \ECCV18SubNumber}

% \authorrunning{ECCV-18 submission ID \ECCV18SubNumber}

% \author{Anonymous ECCV submission}
% \institute{Paper ID \ECCV18SubNumber}

\author{Matthew Trager\inst{1,2}
\and 
Brian Osserman\inst{3}
\and
Jean Ponce\inst{1,2}}

\institute{Inria
\and
\'Ecole Normale Sup\'erieure, CNRS, PSL Research University
\and
UC Davis}

\maketitle

\begin{abstract} A set of fundamental matrices relating pairs of cameras in
some configuration can be represented as edges of a ``viewing graph''. Whether
or not these fundamental matrices are generically sufficient to recover the
global camera configuration depends on the structure of this graph. We study
characterizations of ``solvable'' viewing graphs, and present several new
results that can be applied to determine which pairs of views may be used to
recover all camera parameters. We also discuss strategies for verifying the
solvability of a graph computationally.

\keywords{Viewing graph, fundamental matrix, 3D reconstruction.}
\end{abstract}

\section{Introduction}

Multi-view geometry has been studied by photogrammeters since the
1950s~\cite{ManPhot} and by computer vision researchers since the
1980s~\cite{longuet1981computer}. Still, most results to date are concerned
with using multi-view tensors to characterize feature correspondences in
2, 3, or 4 views, and determine the corresponding projection
matrices~\cite{Hartley98,hartley1997lines,LuFau95,Shashua95}. Although
correspondences have also been characterized for arbitrary numbers of views
\cite{faugeras1995geometry,heyden1997algebraic,triggs1995matching}, very little
theoretical work has been devoted to understanding the geometric constraints
imposed on configurations of $n>4$ cameras by these tensors, including
fundamental matrices~\cite{LuFau95}, which are probably by far the most used in
practice.
Apart from a few works such as~\cite{levi2003viewing,rudi2010linear},
understanding how many and which fundamental matrices can be used to recover
globally consistent camera parameters is a largely unexplored problem.

We address this topic in this paper. Following~\cite{levi2003viewing}, we
associate sets of fundamental matrices with edges of a ``viewing graph'', and
present a series of new results for determining whether a graph is
``solvable'', \ie, whether it represents fundamental matrices that determine a
unique camera configuration. We also describe effective strategies for
verifying solvability, and include some computational experiments using these
methods. Our focus here is clearly of a theoretical nature, but understanding
how subsets of fundamental matrices constrain the reconstruction process is
clearly important in practice as well. Moreover, we will point out that one of
our main results (Theorem~\ref{thm:moves}) is constructive, and could
potentially find applications in reconstruction algorithms (\eg, it could be
incorporated in large-scale systems such as~\cite{SSS06}, that incrementally
build up networks of cameras to estimate their parameters).

% \vspace{.6cm}

\newpage

\noindent{\bf Previous work.} The first investigation of viewing graphs and
their solvability can be found in~\cite{levi2003viewing}. In that work, Levi
and Werman characterize all solvable viewing graphs with at most six vertices,
and discuss a few larger solvable examples. Although they provide some useful
necessary conditions (see our Proposition~\ref{prop:levi} and
Example~\ref{ex:levi}), they do not address the problem of solvability in
general. In~\cite{rudi2010linear}, Rudi {\em et al.} also consider viewing
graphs, studying mainly whether a configuration can be recovered from a set of
fundamental matrices using a linear system. They also present some
``composition rules'' for merging solvable graphs into larger ones. Trager
{\em et. al}~\cite{trager2015joint} provide a sufficient condition for
solvability using $2n-3$ fundamental matrices, and point to a possible
connection with ``Laman graphs'' and graph rigidity theory. Indeed, \"Ozyesil
and Singer~\cite{ozyesil2015robust} show that if one uses {\em essential}
matrices instead of fundamental ones then solvability can be characterized in
terms of so-called ``parallel-rigidity'' for graphs. Their analysis however
does not carry over to the more general setting of uncalibrated cameras.
Finally, the viewing graph has also been considered in more practical work:
for example, in~\cite{sinha2010camera,sweeney2015optimizing}, it is
used to enforce triple-wise consistency among fundamental matrices before
estimating camera parameters.
\\[.2cm]
\noindent{\bf Main contributions.}\\[.1cm]
\noindent $\bullet$ We show that the minimum number of fundamental matrices
    that can be used to recover a configuration of $n$ cameras is always $\lceil (11n - 15)/7 \rceil$ (Theorem~\ref{thm:minimum_number}).\\
\noindent $\bullet$ We present several criteria for deciding whether or not a
    viewing graph is solvable. After revisiting some results
    from~\cite{levi2003viewing,rudi2010linear} (Section~\ref{sec:simple}), we
    describe a new necessary condition for solvability that is based on the
    number of edges and vertices of subgraphs (Theorem~\ref{thm:diff}), as well
    as a sufficient condition based on ``moves'' for adding new edges to a graph
    (Theorem~\ref{thm:moves}).\\
\noindent $\bullet$ We describe an algebraic formulation for solvability that
    in principle can always be used to determine whether any viewing graph is
    solvable. Although this method is computationally challenging for larger
    graphs, we also introduce a much more practical {\em linear} test, that can
    be used to verify whether a viewing graph identifies a finite number of
    camera configurations (Section~\ref{sec:linear}).\\
\noindent $\bullet$ Using an implementation of all the proposed methods, we
    analyze solvability for all minimal viewing graphs with at most $9$
    vertices. We also discuss some relevant examples
    (Section~\ref{sec:experiments}).

% \vspace{-.3cm}

\section{Background}
To make our presentation mostly self-contained, we recall some basic
theoretical facts that are used in the rest of the paper.\\[.4cm]
\noindent {\bf Notation.} We write $\PP^n = \PP(\RR^{n+1})$ for the
$n$-dimensional real projective space. We use bold font for vectors and
matrices, and normal font for projective objects. For example, a point in
$\PP^3$ will be written as $p = [\vect p]$ where $\vect p$ is a vector in
$\RR^{4}$ and $p$ is the equivalence class associated with $\vect p$.
Similarly, a projective transformation represented by a matrix $\vect M$ will
be written as $M = [\vect M]$. We use $GL(n,\RR)$ for the group of $n \times
n$ invertible real matrices.

\subsection{Camera configurations and epipolar geometry}

A projective camera $P = [\vect P]$ is represented by $3\times 4$ matrix $\vect
P$ of full rank, defined up to scale. The matrix $\vect P$ describes a linear
projection $\PP^3\setminus \{c\} \rightarrow \PP^2$ where $c = [\vect c]$ is
the pinhole of the camera, associated with the null-space of $\vect P$.

The matrix group $GL(4,\RR)$ acts on the set of cameras by multiplication on
the right, and represents the group of projective transformations of $\PP^3$,
or of changes of homogeneous coordinates. We will use the fact that the group
of matrices in $GL(4,\RR)$ that fix a camera $P = [\vect P]$ with pinhole $c =
[\vect c]$ is given by
\begin{equation}\label{eq:stab_P} {\rm Stab}(P) = \{\alpha{\vect I}_4 + \vect
c \vect v^T \, | \, \alpha \in \RR\setminus \{0\}, \vect v \in \RR^4\} \cap
GL(4,\RR).
\end{equation}

Here ${\rm Stab}(P)$ stands for ``stabilizer''. Indeed, all the solutions for
$\vect M$ in $\vect P \vect M = \alpha \vect P$ are described
by~\eqref{eq:stab_P}. Note that ${\rm Stab}(P)$ only depends on the pinhole of
$P$. The following important fact follows directly from the form of ${\rm
Stab}(P)$.

\begin{lemma}\label{lemma:joint_stabilizer} Given two cameras $P_1$, $P_2$
with distinct pinholes, we have that
\begin{equation} {\rm Stab}(P_1) \cap {\rm Stab}(P_2) = \{\alpha {\vect I}_4,
\, | \, \alpha
\in \RR\setminus \{0\}\}.
\end{equation} In other words, the identity is the only projective
transformation that fixes both $P_1$ and $P_2$.
\end{lemma}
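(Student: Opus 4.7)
The plan is to apply the explicit description of the stabilizer given in~\eqref{eq:stab_P} to both cameras simultaneously and then exploit the fact that the pinholes $c_1$ and $c_2$ are distinct (hence $\vect c_1$ and $\vect c_2$ are linearly independent vectors in $\RR^4$).

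First I would take an arbitrary element $\vect M \in {\rm Stab}(P_1) \cap {\rm Stab}(P_2)$ and write it in the two forms provided by~\eqref{eq:stab_P}:
\begin{equation*}
\vect M = \alpha_1 \vect I_4 + \vect c_1 \vect v_1^T = \alpha_2 \vect I_4 + \vect c_2 \vect v_2^T,
\end{equation*}
for some scalars $\alpha_1,\alpha_2 \in \RR\setminus\{0\}$ and vectors $\vect v_1, \vect v_2 \in \RR^4$. Rearranging gives
\begin{equation*}
(\alpha_1 - \alpha_2)\vect I_4 = \vect c_2 \vect v_2^T - \vect c_1 \vect v_1^T.
\end{equation*}
The right-hand side has rank at most $2$, while the left-hand side is either zero or of full rank $4$. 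Hence $\alpha_1 = \alpha_2 =: \alpha$ and $\vect c_1 \vect v_1^T = \vect c_2 \vect v_2^T$.

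Next I would argue that this last equality forces $\vect v_1 = \vect v_2 = 0$. If $\vect v_1 \neq 0$, then $\vect c_1 \vect v_1^T$ is a rank-$1$ matrix whose column space is spanned by $\vect c_1$, and similarly $\vect c_2 \vect v_2^T$ would have column space spanned by $\vect c_2$. Equality of these two rank-$1$ matrices then forces $\vect c_1$ and $\vect c_2$ to be proportional, contradicting the hypothesis that the pinholes are distinct. Therefore $\vect v_1 = 0$, and consequently $\vect M = \alpha \vect I_4$.

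The only potentially delicate step is the rank argument above, but it is immediate once one observes that a nonzero scalar multiple of $\vect I_4$ cannot be written as a sum of two rank-$1$ matrices. No further technicalities are expected; the whole proof reduces to unpacking~\eqref{eq:stab_P} and using linear independence of the two pinhole vectors.
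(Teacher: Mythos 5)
Your proof is correct and follows exactly the route the paper intends: the paper simply asserts that the lemma ``follows directly from the form of ${\rm Stab}(P)$'' in~\eqref{eq:stab_P}, and your argument is the natural unpacking of that claim (write $\vect M$ in both forms, use the rank bound to force $\alpha_1=\alpha_2$, then use non-proportionality of $\vect c_1$ and $\vect c_2$ to kill the rank-one terms). Nothing is missing.
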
 

Two sets of cameras $(P_1,\ldots,P_n)$ and $(P_1',\ldots,P_n')$
with $P_i = [\vect P_i]$, $P_i' = [\vect P_i']$ are {\em projectively
equivalent} if there exists a single projective transformation $T$ such that
$P_i = P_i' T$ (so if $T = [\vect T]$  with $\vect T$ in $GL(4,\RR)$, then
$\vect P_i =
\alpha_i \vect P_i' \vect T$ for non-zero constants $\alpha_i$). The set of
{\em configurations} of $n$ cameras is the set of $n$-tuples of cameras up to
projective equivalence. For any $n \ge 2$, the space of camera configurations
can be viewed as a manifold of dimension $11n - 15$.

Given two cameras $P_1 = [\vect P_1]$, $P_2 = [\vect P_2]$, the associated
{\em fundamental matrix} $F(P_1,P_2) = [\vect F]$ can be defined as the $3
\times 3$ matrix (up to scale) with entries
\begin{equation}\label{eq:fund_mat} f_{il} = (-1)^{i+l} \det (\vect P_{1j}^T
\vect P_{1k}^T \vect P_{2m}^T \vect P_{2n}^T),
\end{equation} where $\vect P_{ar}$ denotes the $r$-th row of $\vect P_a$, and
$(i,j,k)$ and $(l,m,n)$ are triples of distinct indices. The fundamental
matrix can be used to characterize pairs of corresponding points in the two
images, since $u_1 = [\vect u_1]$ and $u_2 = [\vect u_2]$ are projections of
the same 3D point if and only if $\vect u_1^T
\vect F \vect u_2 = 0$. For our purposes, 
the most important property of the fundamental matrix is that it is invariant
under projective transformations, and that $F(P_1,P_2)$ uniquely identifies
the configuration of $P_1$ and $P_2$~\cite[Theorem 9.10]{hartley2003multiple}.

Finally, viewed as a subset of $\PP^8$, the (closure of the) set of all
fundamental matrices forms a hypersurface defined by $\det(\vect F) = 0$. If
$F(P_1,P_2)= [\vect F]$, the left and right null-space of $\vect F$ represent
the two {\em epipoles} $e_{12} = P_1 c_2$ and $e_{21} = P_2 c_1$, which
are the images of each pinhole viewed from the other camera. An epipole
accounts for two of seven degrees of freedom of a fundamental matrix. In fact,
the information encoded in the fundamental matrix can be seen as the pair of
epipoles $e_{12}, e_{21}$, together with a projective transformation $\PP^1
\rightarrow \PP^1$ (known as ``epipolar line
homography''~\cite{hartley2003multiple}) between lines containing $e_{12}$ in
the first image and the lines containing $e_{21}$ in the second image. In
particular, the knowledge of two epipoles together with three point
correspondences completely determines a fundamental matrix.

\section{The viewing graph}

The viewing graph is a graph in which vertices correspond to cameras, and edges
represent fundamental matrices between them. More precisely, if $G = (V_G,E_G)$
is an undirected graph with $n$ vertices, and $P_1,\ldots, P_n$ are projective
cameras, we write
\begin{equation}
\mathcal F_G(P_1,\ldots,P_n) = \{F_{ij} = F(P_i,P_j) \, | \, (i,j) \in E_G\},
\end{equation} for the set of fundamental matrices defined by the edges of
$G$. We say that the the set $\mathcal F_G(P_1,\ldots,P_n)$ is {\em solvable}
if $\mathcal F_G(P_1,\ldots,P_n) = \mathcal F_G(P_1',\ldots,P_n')$ implies
that $(P_1,\ldots,P_n)$ and $(P_1',\ldots,P_n')$ are in the same projective
configuration. In other words, a set of fundamental matrices is solvable if
and only if it uniquely determines a projective configuration of cameras.

\begin{proposition}\label{prop:solvable_graph} The solvability of $\mathcal
F_G(P_1,\ldots,P_n)$ only depends on the graph $G$ and on the pinholes
$c_1,\ldots, c_n$ of $P_1,\ldots,P_n$. 
% Moreover, for an open dense set of
% pinholes $c_1,\ldots, c_n$, solvability only depends on $G$.
\end{proposition}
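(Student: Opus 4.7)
The plan is to exploit the fact that two cameras $P = [\vect P]$ and $P' = [\vect P']$ have the same pinhole if and only if $\vect P' = \vect H \vect P$ for some $\vect H \in GL(3,\RR)$ (a change of coordinates in the image plane); both matrices are rank~$3$ with the same one-dimensional null space. Given two tuples $(P_1,\ldots,P_n)$ and $(P_1',\ldots,P_n')$ sharing pinholes $c_1,\ldots,c_n$, I first choose matrices $\vect H_i \in GL(3,\RR)$ with $\vect P_i' = \vect H_i \vect P_i$. Next I record the standard transformation law for fundamental matrices under independent left-actions: from the epipolar equation $\vect u_1^T \vect F \vect u_2 = 0$ and the substitution $\vect u_i \mapsto \vect H_i \vect u_i$ one obtains
\begin{equation*}
F(H_1 P_1,\, H_2 P_2) \;=\; [\,\vect H_1^{-T}\, \vect F(P_1,P_2)\, \vect H_2^{-1}\,].
\end{equation*}

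Assuming $\mathcal F_G(P_1,\ldots,P_n)$ is solvable, I then show that $\mathcal F_G(P_1',\ldots,P_n')$ is also solvable (the reverse direction is symmetric, since the roles of the two tuples are interchangeable). Given any $(Q_1,\ldots,Q_n)$ with $\mathcal F_G(Q_1,\ldots,Q_n) = \mathcal F_G(P_1',\ldots,P_n')$, I introduce auxiliary cameras defined by $\vect Q_i'' := \vect H_i^{-1} \vect Q_i$. Applying the transformation law to both sides of the edgewise equality $F(Q_i,Q_j) = F(P_i',P_j')$ gives $F(Q_i'',Q_j'') = F(P_i,P_j)$ for every edge $(i,j) \in E_G$; in other words, $\mathcal F_G(Q_1'',\ldots,Q_n'') = \mathcal F_G(P_1,\ldots,P_n)$. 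By the assumed solvability, there exists a projective transformation $T$ of $\PP^3$ with $\vect Q_i'' = \alpha_i \vect P_i \vect T$ for nonzero scalars $\alpha_i$. Left-multiplying by $\vect H_i$ then yields $\vect Q_i = \alpha_i \vect P_i' \vect T$, so $(Q_i)$ and $(P_i')$ are projectively equivalent, as required.

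The only step requiring care is the transformation rule for $\vect F$ under an independent left-action by $\vect H_1$ and $\vect H_2$; one could alternatively read it off the determinantal formula~\eqref{eq:fund_mat} for the fundamental matrix, but either way it is a routine calculation. Once that is in hand, the whole argument amounts to a conjugation trick: the group $GL(3,\RR)^n$ of pointwise image-coordinate changes acts compatibly on camera tuples and on edge-labelled sets of fundamental matrices, so it permutes solvable tuples among all tuples with a fixed set of pinholes.
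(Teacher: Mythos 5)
Your proof is correct and follows essentially the same route as the paper's: both arguments reduce the statement to the fact that camera tuples sharing a set of pinholes differ by per-image coordinate changes $S_i \in PGL(3)$, under which fundamental matrices transform equivariantly and projective equivalence of tuples is preserved. You simply make explicit the two ingredients the paper leaves implicit (the characterization of equal pinholes via left multiplication by $\vect H_i \in GL(3,\RR)$, and the transformation law $F(H_1P_1,H_2P_2)=[\vect H_1^{-T}\vect F\vect H_2^{-1}]$), which is a welcome elaboration but not a different method.
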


\begin{proof} The statement expresses the fact that changes of image
coordinates are only a “relabeling” of a camera configuration and the
associated fundamental matrices. More precisely, if $S_1,\ldots,S_n$ are
arbitrary projective transformations of $\PP^2$, then $(P_1, \ldots, P_n)$ and
$(P_1',\ldots,P_n')$ are in the same configuration if and only if the same is
true for $(S_1 P_1, \ldots, S_n P_n)$ and $(S_1 P_1',\ldots,S_n P_n')$. This
implies that $\mathcal F_G(P_1,\ldots,P_n)$ is solvable if and only $\mathcal
F_G(S_1 P_1,\ldots,S_n P_n)$ is.
\hfill \qed
\end{proof}

\begin{example}\label{ex:aligned} If $G$ is a {\em complete} graph with $n \ge 3$ vertices,
then $\mathcal F_G(P_1,\ldots,P_n)$ is solvable if and only if the pinholes of
the cameras $P_1,\ldots,P_n$ are not all aligned. Indeed, if the pinholes are
aligned, then the fundamental matrices between {\em all} pairs of cameras are
not sufficient to completely determine the configuration: replacing any $P_i =
[\vect P_i]$ with $P_i' = [\vect P_i (\vect I_4 + \vect c_j \vect v^T)]$, where
$c_j = [\vect c_j]$ is the pinhole of another camera and $\vect v^T$ is
arbitrary, yields a new set of cameras which belongs to a different
configuration but has the same set of fundamental matrices. Conversely, it is
known (see for example~\cite{heyden1997algebraic,trager2015joint}) that the
complete set of fundamental matrices determines a unique camera configuration
whenever there are at least three non-aligned pinholes. 
\hfill $\diamondsuit$
\end{example}

In the rest of the paper we will only consider generic configurations of
cameras/pinholes (so a complete graph will always be solvable). This covers
most cases of practical interest, although in the future degenerate
configurations (including some collinear or coplanar pinholes) could
be studied as well.

% According to Proposition~\ref{prop:solvable_graph}, we give the following
% definition~\cite{levi2003viewing}
 
\begin{definition} A viewing graph $G $ is said to be \emph{solvable} if
$\mathcal F_G(P_1,\ldots,P_n)$ is solvable for generic cameras
$P_1,\ldots,P_n$.
\end{definition}

In other words, solvable viewing graphs describe sets of fundamental matrices
that are generically sufficient to recover a camera configuration. Despite its
clear significance, the problem of characterizing which viewing graphs are
solvable has not been studied much, and only partial answers are available in
the literature (mainly in~\cite{levi2003viewing,rudi2010linear}). It is quite
easy to produce examples of graphs that are solvable, but it is much more
challenging, given a graph, to determine whether it is solvable or not. The
following observation provides another useful formulation of solvability (note
that the ``if'' part requires the genericity assumption, as shown in
Example~\ref{ex:aligned}).

\begin{lemma} A viewing graph $G$ is solvable if and only if, for generic
cameras $P_1,\ldots,P_n$, the fundamental matrices $\mathcal
F_G(P_1,\ldots,P_n) = \{F(P_i,P_j) \, | \, (i,j) \in E_G \}$ uniquely
determine the remaining fundamental matrices $\{F(P_i,P_j) \, | \,
(i,j) \not \in E_G \}$.
\end{lemma}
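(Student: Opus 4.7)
The plan is to prove the two implications separately, with both directions reducing to the observation that the complete set of fundamental matrices is in bijection (up to projective equivalence) with camera configurations when the pinholes are sufficiently generic.

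For the forward direction, suppose $G$ is solvable. Then for generic $P_1,\dots,P_n$, the set $\mathcal F_G(P_1,\dots,P_n)$ determines the configuration $(P_1,\dots,P_n)$ up to a single projective transformation $T\in GL(4,\RR)$. Since a fundamental matrix is by definition an invariant of a pair of cameras under projective change of coordinates in $\PP^3$, the value of $F(P_i,P_j)$ for any pair $(i,j)\notin E_G$ is determined by this configuration. So the remaining fundamental matrices are uniquely recovered.

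For the backward direction, assume that, for generic choices of $P_1,\dots,P_n$, the set $\mathcal F_G(P_1,\dots,P_n)$ uniquely determines every $F(P_i,P_j)$ for $(i,j)\notin E_G$, hence the full set $\mathcal F_{K_n}(P_1,\dots,P_n)$ corresponding to the complete graph on $n$ vertices. By Example~\ref{ex:aligned}, if $n\ge 3$ and the pinholes are not all collinear (which holds for generic cameras), then this complete set of fundamental matrices determines a unique camera configuration. Hence any $(P_1',\dots,P_n')$ with $\mathcal F_G(P_1',\dots,P_n')=\mathcal F_G(P_1,\dots,P_n)$ must yield the same complete set of fundamental matrices, and so must be projectively equivalent to $(P_1,\dots,P_n)$. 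The case $n=2$ is trivial since then $G$ solvable reduces to $E_G$ containing the unique edge, and there are no missing fundamental matrices.

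The only subtle point — and the reason the statement requires the word ``generic'' — is that the implication ``complete set of fundamental matrices $\Rightarrow$ unique configuration'' can fail when all pinholes are collinear, as shown in Example~\ref{ex:aligned}. Since our hypothesis is for generic cameras, collinearity of all pinholes occurs only on a Zariski-closed proper subset, so both directions hold on a generic (dense open) set of configurations, which is exactly what the definition of solvability requires.
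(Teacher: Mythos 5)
Your proof is correct, and it follows exactly the route the paper intends: the paper states this lemma as an observation without proof, noting only that the ``if'' direction requires genericity as illustrated by Example~\ref{ex:aligned}, which is precisely the point your backward direction makes explicit (reduce to the complete graph and invoke uniqueness for non-collinear pinholes), while your forward direction is the standard projective-invariance argument. No gaps.
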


This viewpoint also suggests the idea that, given any graph $G$, we can define
a ``solvable closure'' $\overline G$, as the graph obtained from $G$ by adding
edges corresponding to fundamental matrices that can be deduced generically
from $\mathcal F_G(P_1,\ldots,P_n)$. Hence, a graph is solvable if and only if
its closure is a complete graph. We will return to this point in
Section~\ref{sec:constructive}.

\subsection{Simple criteria}
\label{sec:simple}

We begin by recalling two necessary conditions for solvability that were shown
in~\cite{levi2003viewing}. These provide simple criteria to show that a viewing
graph is {\em not} solvable.

\begin{proposition}\cite{levi2003viewing}\label{prop:levi} If a viewing graph
with $n>3$ vertices is solvable, then: 1)~All vertices have degree at least 2. 2)~No two adjacent vertices have degree 2.
\end{proposition}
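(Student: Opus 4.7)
My plan is to prove both claims by contraposition: assuming either a vertex of degree $\le 1$, or two adjacent vertices both of degree $2$, I will exhibit a positive-dimensional family of non-equivalent camera configurations that share the same set of fundamental matrices, which contradicts solvability. The recurring tool is Lemma~\ref{lemma:joint_stabilizer}: since $n>3$, any local perturbation I apply will leave at least two cameras with (generically) distinct pinholes untouched, so any global projective transformation fixing those is forced to be the identity; the perturbed tuple is therefore never in the same configuration as the original unless the perturbation is itself trivial.

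\textbf{Degree $0$ or $1$.} If $v$ has degree $0$, simply replacing $P_v$ by a generic other camera leaves $\mathcal F_G$ unchanged but produces an $11$-parameter family of configurations, by the observation above. If $v$ is adjacent only to $u$, pick any non-identity $T \in {\rm Stab}(P_u)$, set $P_v' = P_v T^{-1}$, and keep all other cameras fixed. Using the invariance of the fundamental matrix under common right-multiplication together with $P_u T = P_u$ (up to scale), one computes $F(P_u, P_v') = F(P_u, P_v T^{-1}) = F(P_u T, P_v) = F(P_u, P_v)$, while all other fundamental matrices are trivially unaffected. Yet the configurations differ, since otherwise $P_v' = P_v$ would force $T \in {\rm Stab}(P_u) \cap {\rm Stab}(P_v) = \{{\rm id}\}$, contradicting the choice of $T$.

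\textbf{Two adjacent degree-$2$ vertices.} Suppose $u \sim v$ with both of degree $2$, and let $w$ be $u$'s other neighbor and $x$ be $v$'s other neighbor (possibly $w = x$, but both distinct from $u$ and $v$). Write $P_u' = P_u T_u$ and $P_v' = P_v T_v$ and keep every other camera fixed. Preservation of $F(P_u', P_w)$ restricts $P_u'$ to the orbit of $P_u$ under ${\rm Stab}(P_w)$, a $4$-parameter family, so one may assume $T_u \in {\rm Stab}(P_w)$; similarly $T_v \in {\rm Stab}(P_x)$ contributes $4$ parameters. The remaining edge constraint $F(P_u', P_v') = F(P_u, P_v)$ is equivalent to $T_u T_v^{-1} \in {\rm Stab}(P_u) \cdot {\rm Stab}(P_v)$; by Lemma~\ref{lemma:joint_stabilizer} this product is $8$-dimensional inside the $15$-dimensional group $PGL(4, \RR)$, so imposes codimension $7$ on the $8$-dimensional space of pairs $(T_u, T_v)$. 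What remains is a $1$-parameter solution curve through $({\rm id}, {\rm id})$; any non-identity point of this curve must be non-trivial on each coordinate (since $T_u \in {\rm Stab}(P_u) \cap {\rm Stab}(P_w) = \{{\rm id}\}$ would force $T_u = {\rm id}$, and similarly for $T_v$), and it yields a configuration genuinely different from the original with the same edge fundamental matrices.

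\textbf{Main obstacle.} The delicate step is the transversality justifying the count $4 + 4 - 7 = 1$: one must argue that, for generic pinholes, the $8$-dimensional image of $(T_u, T_v) \mapsto T_u T_v^{-1}$ meets the $8$-dimensional subset ${\rm Stab}(P_u) \cdot {\rm Stab}(P_v)$ of $PGL(4,\RR)$ in a curve rather than in a point. This is the step most in need of careful verification (for instance by writing $T_u = \alpha {\vect I}_4 + \vect c_w \vect v_u^T$ and $T_v = \beta {\vect I}_4 + \vect c_x \vect v_v^T$ explicitly and expanding), and it rests squarely on the standing generic-pinhole assumption together with repeated invocations of Lemma~\ref{lemma:joint_stabilizer}.
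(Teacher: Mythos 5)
First, a point of reference: the paper does not prove this proposition at all --- it is quoted from Levi and Werman --- so there is no internal argument to compare yours against. Your overall strategy (perturb the cameras at the low-degree vertices inside suitable stabilizers, then use Lemma~\ref{lemma:joint_stabilizer} together with the $n-2\ge 2$ untouched cameras to rule out a global projective equivalence) is sound, and it is exactly the style of argument the paper uses for its $2$-connectivity proposition. The degree-$0$ and degree-$1$ cases are complete and correct, and your reduction of the remaining case to the condition $T_uT_v^{-1}\in{\rm Stab}(P_u)\cdot{\rm Stab}(P_v)$ with $T_u\in{\rm Stab}(P_w)$, $T_v\in{\rm Stab}(P_x)$ is also correct (the ``one may assume'' steps are genuinely without loss of generality, since $F(P_u',P_w)=F(P_u,P_w)$ forces $P_u'$ into the ${\rm Stab}(P_w)$-orbit of $P_u$).

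Two things in the adjacent degree-$2$ case still need repair. The step you flag is a genuine gap as written: $4+4-7=1$ is only an \emph{expected} dimension. Over $\CC$ the solution set through $(\mathrm{id},\mathrm{id})$ automatically has dimension at least $1$ (it is the preimage of a smooth codimension-$7$ locus, hence locally cut out by $7$ equations), but over $\RR$ the real points of such a set can collapse to a single point, so you must actually exhibit the curve. The computation you propose does work: with $C=(\vect c_u\ \vect c_v\ \vect c_w\ \vect c_x)$ invertible and $\vect d_u^T,\ldots,\vect d_x^T$ the rows of $C^{-1}$, the family $T_v=[\vect I_4+(\epsilon-1)\vect c_x\vect d_x^T]$, $T_u=[\vect I_4-\tfrac{\epsilon-1}{\epsilon}\vect c_w\vect d_w^T]$ gives $T_vT_u^{-1}=[\epsilon\vect I_4-(\epsilon-1)(\vect c_u\vect d_u^T+\vect c_v\vect d_v^T)]=[\epsilon\vect I_4-(\epsilon-1)\vect c_v\vect d_v^T]\,[\vect I_4-\tfrac{\epsilon-1}{\epsilon}\vect c_u\vect d_u^T]\in{\rm Stab}(P_v)\cdot{\rm Stab}(P_u)$, a real one-parameter family of inequivalent configurations. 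Second, when $w=x$ your codimension count does not apply: $T_uT_v^{-1}$ then lies in the $4$-dimensional group ${\rm Stab}(P_w)$, which meets ${\rm Stab}(P_u)\cdot{\rm Stab}(P_v)$ only in scalars. That case is instead immediate --- take $T_u=T_v=T$ for any $T\in{\rm Stab}(P_w)$, which preserves all three fundamental matrices outright and yields a $4$-parameter family. With these two repairs the proof is complete.
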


We extend this result with the following necessary condition (which implies
the first point in the previous statement).

\begin{proposition} Any solvable graph is 2-connected, \ie, it has the
property that after removing any vertex the graph remains connected.
\end{proposition}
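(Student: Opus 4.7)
The plan is to prove the contrapositive: if $G$ fails to be 2-connected, then $G$ is not solvable, by exhibiting two non-equivalent camera configurations that produce the same fundamental matrices on the edges of $G$. Assume first the interesting case that $G$ is connected but has a cut vertex $v$, so the vertex set decomposes as $V_G = A \sqcup \{v\} \sqcup B$ with $A,B$ both nonempty and no edges between $A$ and $B$. Every edge of $G$ then lies either in the induced subgraph on $A\cup\{v\}$ or in the one on $B\cup\{v\}$.

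The construction uses the stabilizer from \eqref{eq:stab_P}. Pick generic cameras $P_1,\ldots,P_n$ and choose two elements $T_A, T_B \in {\rm Stab}(P_v)$ that are not scalar multiples of each other; this is possible because ${\rm Stab}(P_v)$ has dimension $5$, so it is far larger than the one-parameter family of scalar matrices. Define the modified configuration by $P_i' = P_i T_A$ for $i\in A$, $P_j' = P_j T_B$ for $j\in B$, and $P_v' = P_v$. Each fundamental matrix on an edge of $G$ is preserved: for edges inside $A$ this is the invariance of $F$ under applying a common projective transformation to both cameras, and for an edge $(i,v)$ with $i\in A$ we use $P_v T_A = P_v$ (as projective cameras) to write
\begin{equation*}
F(P_i',P_v') = F(P_i T_A, P_v) = F(P_i T_A, P_v T_A) = F(P_i, P_v),
\end{equation*}
and symmetrically on the $B$ side.

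It remains to verify the two configurations are not projectively equivalent. If they were, there would exist a single $T\in GL(4,\RR)$ with $P_i T_A = P_i T$ for all $i\in A$, $P_j T_B = P_j T$ for all $j\in B$, and $P_v = P_v T$. The last equation puts $T\in{\rm Stab}(P_v)$. For a generic $i\in A$ with $i\ne v$, the identity $P_i T_A = P_i T$ places $T T_A^{-1}$ in ${\rm Stab}(P_i)$, while $T_A$ and $T$ both lie in ${\rm Stab}(P_v)$; Lemma~\ref{lemma:joint_stabilizer} then forces $T T_A^{-1} = \alpha {\vect I}_4$ for some scalar $\alpha$. The same argument on the $B$ side gives $T T_B^{-1} = \beta {\vect I}_4$, so $T_A$ and $T_B$ are scalar multiples of each other, contradicting our choice. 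The case where $G$ is disconnected is handled by an easier version of the same construction: choose any non-scalar $T_A\in GL(4,\RR)$ and apply it to the cameras of one connected component while leaving the others fixed.

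The only real subtlety is ensuring that after the joint-stabilizer reduction we really do get a contradiction; this is why we need both $A$ and $B$ to contain at least one vertex other than $v$ (so Lemma~\ref{lemma:joint_stabilizer} can be invoked on each side), and why we must pick $T_A, T_B$ genuinely non-proportional in $\mathrm{Stab}(P_v)$ rather than just non-scalar. Both are harmless conditions given the five-dimensionality of $\mathrm{Stab}(P_v)$ and the nonemptiness of $A$ and $B$.
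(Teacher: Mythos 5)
Your proposal is correct and follows essentially the same route as the paper: remove the cut vertex $v$, apply two non-proportional elements of ${\rm Stab}(P_v)$ to the two sides, and observe that all fundamental matrices on edges of $G$ are preserved while the configuration changes. The only difference is that you spell out the non-equivalence verification via Lemma~\ref{lemma:joint_stabilizer} (and the disconnected case), which the paper's proof leaves implicit.
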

\begin{proof} Assume that a vertex $i$ disconnects the graph $G$ into two
components $G_1, G_2$, and let $P_1,\ldots,P_n$ be a set of $n$ generic
cameras, whose pairwise fundamental matrices are represented by the edges of
$G$. If $c_i = [\vect c_i]$ is the pinhole of the camera $P_i$ associated with
$i$, then we consider two distinct projective transformations of the form $T_1
= [\vect I_4 + \alpha_1 \vect c_i \vect v_1^T]$ and $T_2 = [\vect I_4 +
\alpha_2
\vect c_i \vect v_2^T]$. These transformations fix the camera $P_i$. If we
apply $T_1$ to all cameras in $G_1$ and $T_2$ to all cameras in $G_2$, while
leaving $P_i$ fixed, we obtain a different camera configuration that gives
rise to the same set of fundamental matrices as $P_1,\ldots,P_n$ for all edges
in $G$. \hfill \qed
\end{proof}

We also recall a result from~\cite{rudi2010linear} which will be used in the
next section.

\begin{proposition}\cite{rudi2010linear} \label{prop:construction} If $G_1$ and $G_2$ are solvable
viewing graphs, then the graph $G$ obtained by identifying two vertices from
$G_1$ and with two from $G_2$ is solvable.
\end{proposition}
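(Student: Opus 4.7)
The plan is to show that any two camera configurations which induce the same fundamental matrices on every edge of $G$ must be projectively equivalent. Let $\{a,b\} = V(G_1) \cap V(G_2)$ denote the two identified vertices, so $V(G) = V(G_1) \cup V(G_2)$. Starting from a generic configuration $(P_v)_{v \in V(G)}$ and an alternate configuration $(P'_v)_{v \in V(G)}$ that induces the same fundamental matrix on every edge of $G$, I aim to exhibit a single projective transformation $T$ with $P'_v = P_v T$ for every $v \in V(G)$.

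The first step restricts attention to $G_1$: since every edge of $G_1$ is an edge of $G$, the restricted configurations $(P_v)_{v \in V(G_1)}$ and $(P'_v)_{v \in V(G_1)}$ induce the same fundamental matrices on $G_1$. By solvability of $G_1$ there exists $T_1 \in GL(4,\RR)$ with $P'_v = P_v T_1$ for all $v \in V(G_1)$. Applying the same argument to $G_2$ produces $T_2$ with $P'_v = P_v T_2$ for all $v \in V(G_2)$.

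The final step is to glue $T_1$ and $T_2$. At the two shared vertices we have $P_a T_1 = P'_a = P_a T_2$, and similarly for $b$, so $T_1 T_2^{-1}$ lies in $\mathrm{Stab}(P_a) \cap \mathrm{Stab}(P_b)$. Because the overall configuration is generic, the pinholes of $P_a$ and $P_b$ are distinct, and Lemma~\ref{lemma:joint_stabilizer} forces $T_1 T_2^{-1}$ to be the identity as a projective transformation; hence $T_1 = T_2$ and this common transformation serves as the required $T$ on all of $V(G)$.

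The only issue to watch is the invocation of solvability on the two subgraphs: one needs generic cameras on $V(G)$ to restrict to generic cameras on $V(G_1)$ and $V(G_2)$, so that the solvability hypothesis applies to each. This is automatic, since the restriction map is merely a projection of generic parameters. Consequently, the substantive content of the proof reduces to the uniqueness of the gluing transformation, which is handled cleanly by Lemma~\ref{lemma:joint_stabilizer} and is the only non-formal step.
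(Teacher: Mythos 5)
Your proof is correct. Note that the paper itself gives no proof of this proposition --- it is quoted from Rudi \emph{et al.}~\cite{rudi2010linear} --- but your argument is the natural one and uses exactly the tools the paper sets up: restrict the two configurations to $V(G_1)$ and $V(G_2)$, obtain $T_1$ and $T_2$ from solvability of each piece (genericity does restrict correctly, since the preimage of a proper closed subset under the projection of parameters is again proper and closed), and then conclude $T_1 = T_2$ as projective transformations because $T_1T_2^{-1}$ stabilizes both shared cameras and Lemma~\ref{lemma:joint_stabilizer} forces it to be a scalar matrix. No gaps.
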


Note that if both pairs of vertices in the previous statement are connected by
edges in $G_1$ and $G_2$, then these two edges will automatically be identified
in~$G$.

\subsection{How many fundamental matrices?}

We now ask ourselves what is the minimal number of edges that a graph must have
to be solvable (or, equivalently, how many fundamental matrices are required to
recover a camera configuration). Since a single epipolar relation provides at
most $7$ constraints in the $(11n-15)$-dimensional space of camera configurations,
we deduce that any solvable graph must have {\em at least} $e(n) = \lceil (11 n
- 15)/7)
\rceil$ edges. This fact was previously observed in~\cite[Theorem
2]{rudi2010linear}. However, compared to~\cite{rudi2010linear}, we show here
that this bound is tight, \ie, that there always exists a solvable graph with
$e = e(n)$ edges. Concretely, this means that, for $n$ generic views, there is
always a way of recovering the corresponding camera configuration using $e(n)$
fundamental matrices.

\begin{theorem}\label{thm:minimum_number} The minimum
number of edges of a solvable viewing graph with $n\ge 2$ views is
\[ e(n) = \left\lceil \frac{11n-15}{7} \right \rceil.
\]
\end{theorem}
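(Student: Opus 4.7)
The lower bound $e(n) \ge \lceil (11n-15)/7 \rceil$ is the dimension-count already noted just before the theorem: the camera configuration space has dimension $11n - 15$, and each fundamental matrix, being a point on the seven-dimensional hypersurface $\{\det = 0\} \subset \PP^8$, contributes at most seven constraints.

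The substance of the theorem is the matching upper bound, for which the plan is an explicit inductive construction. The key arithmetic is that $e(n+7) - e(n) = 11$ for every $n \ge 2$, so it is natural to build solvable graphs in blocks of seven vertices and eleven edges using the gluing operation of Proposition~\ref{prop:construction}. Concretely, I would fix once and for all a solvable ``glue block'' $H$ with nine vertices, twelve edges, and a distinguished edge $\{u,v\}$. Given any solvable graph $G_n$ on $n$ vertices with $e(n)$ edges containing at least one edge, identifying $\{u,v\}$ with any edge of $G_n$ yields, via Proposition~\ref{prop:construction}, a solvable graph on $n + 9 - 2 = n + 7$ vertices with $e(n) + 12 - 1 = e(n+7)$ edges. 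Since the resulting graph again contains plenty of edges, the construction iterates, and it remains only to supply the base cases $n = 2, 3, \ldots, 8$ together with the block $H$ itself.

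For $n \le 6$ the required edge-optimal solvable graphs are already furnished by the Levi--Werman classification~\cite{levi2003viewing} (or assembled from triangles by iterated edge-gluings via Proposition~\ref{prop:construction}). The remaining cases $n = 7, 8, 9$ are where the actual work lies, and constitute the main obstacle. A short arithmetic check shows that none of these three values of $n$ can be attained by a nontrivial Proposition~\ref{prop:construction} gluing of two smaller edge-optimal solvable graphs: whenever both pieces have more than two vertices, the sum $e(n_1) + e(n_2) - 1$ overshoots $e(n_1 + n_2 - 2)$ by at least one edge. Accordingly, for each of $n = 7, 8, 9$ one has to write down an explicit candidate graph---for instance a carefully chosen arrangement of overlapping triangles and $K_4 - e$ subgraphs tuned to the right vertex and edge counts---and directly certify its solvability using the sufficient condition of Theorem~\ref{thm:moves}, or, when the move-based test is inconclusive, the linear criterion of Section~\ref{sec:linear}. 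Once $H$ and the other base cases are in hand the inductive step is immediate, and the bound $e(n)$ is attained for every $n \ge 2$.
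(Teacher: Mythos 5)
Your argument is essentially the paper's own proof: the lower bound is the same dimension count, and the upper bound is obtained by exhibiting edge-optimal base cases for $n \le 9$ (certified via Theorem~\ref{thm:moves}) and then repeatedly gluing a fixed $9$-vertex, $12$-edge solvable block along an edge via Proposition~\ref{prop:construction}, which adds $7$ vertices and $11$ edges per step. One small inaccuracy in a side remark: $e(5)+e(5)-1 = 11 = e(8)$, so the $n=8$ base case \emph{can} in fact be obtained by gluing two minimal $5$-vertex solvable graphs along an edge, contrary to your claim that all of $n=7,8,9$ resist such a decomposition; this does not affect your argument, since you construct those cases explicitly anyway.
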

\begin{proof} For $n\le 9$, examples of solvable viewing graphs with $e(n)$
edges are illustrated Figure~\ref{fig:solvable_9}. The solvability of these
graphs will be shown in Section~\ref{sec:constructive} (all but one of these
also appear in~\cite{levi2003viewing}). In particular, let $G_0$ be a solvable
viewing graph with $9$ vertices and $12$ edges. Using
Proposition~\ref{prop:construction}, we deduce that, starting from a solvable
viewing graph $G$ with $n$ vertices and $e$ edges, we can always construct a
solvable graph $G'$ with $n+7$ vertices and $e
+ 11$ edges. The graph $G'$ is simply obtained by merging $G$ and $G_0$ as in
  Proposition~\ref{prop:construction}, using two pairs of vertices both
  connected by edges.

Now, for any $n>9$, we consider the unique integers $q, r$ such that $n=7q+ r$
and $2 \le r \le 8$. It is easy to see that
\[ e(n) = \left\lceil \frac{11n-15}{7} \right \rceil = 11q + \left\lceil
\frac{11r-15}{7} \right \rceil.
\] To obtain a solvable viewing graph with $n$ vertices and $e(n)$ edges, we
start from a solvable graph with $r$ vertices and $e(r)$ edges, and repeat the
gluing construction described above $q$ times. The resulting graph is solvable
and has the desired number of vertices and edges.
\hfill \qed
\end{proof}

\begin{remark} It is worth pointing out that, in order to recover projection
matrices for $n$ views, it is quite common to use $2n-3$ fundamental matrices
(see for example~\cite[Sec.4.4]{heyden2000tensorial}). In fact, as shown
in~\cite[Proposition 7]{trager2015joint}, a large class of solvable viewing
graphs can be defined, starting for example from a $3$-cycle, by adding
vertices of degree two, one at the time: this always gives a total of $2n-3$
edges. For this type of viewing graphs it is possible to recover projection
matrices incrementally, using a pair of fundamental matrices for each new camera.
In fact, it is probably quite often erroneously believed that $2n-3$ is the
minimal number of fundamental matrices that are required for multi-view
reconstruction. Part of the confusion may arise from the fact that the ``joint
image''~\cite{triggs1995matching,trager2015joint,aholt2013hilbert}, which
characterizes multi-view point correspondences in $(\PP^2)^n$, has dimension
three (or codimension $2n-3$). This means means that we expect $2n-3$
conditions to be necessary to cut out generically the set of image
correspondences among $n$ views. On the other hand, according to
Theorem~\ref{thm:minimum_number}, fewer constraints are actually sufficient to
determine camera geometry.\footnote{This implies however that fewer than
$2n-3$ conditions can in fact determine a joint image in $(\PP^2)^n$, at least ``indirectly'' through the camera configuration. Mathematically, this is an interesting phenomenon that could be investigated in the future.}% 
\end{remark}
Some values of $e(n)$ are listed in Table~\ref{tab:e(n)} (here $d(n)$
represents the minimal number of constraints on the fundamental matrices, and
will be discussed in the next section). Note that $e(n) < 2n -3$ for all
$n\ge5$.
% \vspace{-.3cm}
\begin{table}
\scriptsize
\centering
\caption{The relation between $n$, $e(n)$, and $d(n)$}
\begin{tabular}{ @{} c c c c c c c c c c c c c c c c @{}} 
% \toprule
$n$ & 2 & 3 & 4 & 5 & 6 & 7
& 8 & 9 & 10 & 11 & 12 & 13 & 14 & 15 & 16 \\ \midrule 
$e(n)$ & 1 & 3 & 5 & 6 & 8 & 9 & 11 & 12 & 14 & 16 & 17 & 19 & 20 & 22 & 23 \\ \midrule
$d(n)$ & 0 & 3 & 6 & 2 & 5 & 1 & 4 & 0 & 3 & 6 & 2 & 5 &
1 & 4 & 0 \\ %\bottomrule
\end{tabular}
\label{tab:e(n)}
\end{table}
\begin{figure}[htbp]
    \centering    % \input{figures/graph3.tikz}
    \begin{tikzpicture}[x=1.1cm, y=1.1cm, step=1] %[label distance=1pt] baseline=.5cm
%%% coords
\node[myNode](0) at (0,0){};
\node[myNode](1) at (0,1){};
\node[myNode](2) at (1,1){};
\node[myNode](3) at (1,0){};
%%% edges
\draw[myEdge] (0) -- (1) -- (2) -- (3) -- (0) -- (2);

\node[below] at (.5,0){\scriptsize $n=4$}; 
\end{tikzpicture}
    \begin{tikzpicture}[x=1.3cm, y=1.3cm, step=1] %[label distance=1pt] baseline=.4cm
%%% coords
\node[myNode](0) at (0,-.1){};
\node[myNode](1) at (.6,.5){};
\node[myNode](2) at (0,1.1){};
\node[myNode](3) at (-.6,.5){};
\node[myNode](4) at (.2,.5){};
%%% edges
\draw[myEdge] (0) -- (1) -- (2) -- (3) -- (0) -- (4) -- (2); 
\node[below] at (0,-.15){\scriptsize $n=5$}; 
\end{tikzpicture}
    \begin{tikzpicture}[x=.6cm, y=.6cm, step=1] %[label distance=1pt] , baseline=0cm
%%% coords
\node[myNode](0) at (0,-1){};
\node[myNode](1) at (-1,0){};
\node[myNode](2) at (0,1){};
\node[myNode](3) at (1,0){};
\node[myNode](4) at (-.9,1.5){};
\node[myNode](5) at (.76,1.5){};
%%% edges
\draw[myEdge] (0) -- (1) -- (2) -- (3) -- (0); 
\draw[myEdge] (1) -- (4) -- (2); 
\draw[myEdge] (4) -- (5) -- (3); 

\node[below] at (0){\scriptsize $n=6$}; 
\end{tikzpicture}
    \begin{tikzpicture}[x=.8cm, y=.8cm, step=1] %[label distance=1pt] baseline=-.2cm
%%% coords
\node[myNode](0) at (0,0){};
\node[myNode](1) at (0,1){};
\node[myNode](2) at (-1,.5){};
\node[myNode](3) at (-1,-.5){};
\node[myNode](4) at (0,-1){};
\node[myNode](5) at (1,-.5){};
\node[myNode](6) at (1,.5){};
%%% edges
\draw[myEdge] (0) -- (1) -- (2) -- (3) -- (0);
\draw[myEdge] (0) -- (5) -- (4) -- (3); 
\draw[myEdge] (1) -- (6) -- (5); 

\node[below] at (0,-1){\scriptsize $n=7$}; 
\end{tikzpicture}
    \begin{tikzpicture}[x=.8cm, y=.8cm, step=1] %[label distance=1pt] baseline=-.2cm
%%% coords
\node[myNode](0) at (0,0){};
\node[myNode](1) at (0,1){};
\node[myNode](2) at (-1,.5){};
\node[myNode](3) at (-1,-.5){};
\node[myNode](4) at (0,-1){};
\node[myNode](5) at (1,-.5){};
\node[myNode](6) at (1,.5){};
\node[myNode](7) at (.5,.25){};
%%% edges
\draw[myEdge] (0) -- (1) -- (2) -- (3) -- (0);
\draw[myEdge] (0) -- (5) -- (4) -- (3); 
\draw[myEdge] (1) -- (6) -- (5); 
\draw[myEdge] (1) -- (7) -- (5);

\node[below] at (0,-1){\scriptsize $n=8$}; 

\end{tikzpicture}
    \begin{tikzpicture}[x=.8cm, y=.8cm, step=1] %[label distance=1pt] , baseline=-.2cm
%%% coords
\node[myNode](0) at (0,1){};
\node[myNode](1) at (-1,.5){};
\node[myNode](2) at (-1,-.5){};
\node[myNode](3) at (0,-1){};
\node[myNode](4) at (1,-.5){};
\node[myNode](5) at (1,.5){};
\node[myNode](6) at (.5,.25){};
\node[myNode](7) at (-.5,.25){};
\node[myNode](8) at (0,-.5){};
%%% edges
\draw[myEdge] (0) -- (1) -- (2) -- (3) -- (4) -- (5) -- (0);
\draw[myEdge] (0) -- (6) -- (4) -- (8) -- (2) -- (7) -- (0);

\node[below] at (0,-1){\scriptsize $n=9$}; 
\end{tikzpicture}
    % \vspace{-.3cm}
    \caption{Examples of minimal solvable viewing graphs for $n \le 9$ views (see Section~\ref{sec:constructive})}
    \label{fig:solvable_9}
\end{figure}
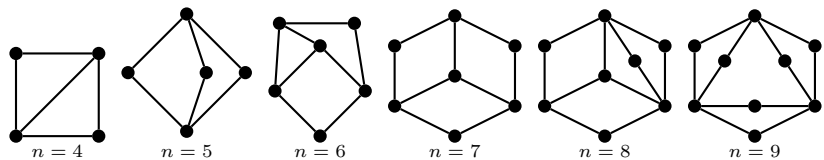

% \vspace{-.3cm}

\subsection{Constraints on fundamental matrices}

Closely related to the solvability of viewing graphs is the problem of
describing {\em compatibility} of fundamental matrices. Indeed, given a
solvable graph $G$, it is not true in general that any set of fundamental
matrices can be assigned to the edges of $G$, since fundamental matrices must
satisfy some feasibility constraints in order to correspond to an actual
camera configuration. For example, it is well known that the fundamental
matrices $F_{12}, F_{23}, F_{31}$ relating three pairs of cameras with 
non-aligned pinholes are compatible if and only if
\begin{equation}\label{eq:compatible_triple}
\vect e_{13}^T \vect F_{12} \vect e_{23} = \vect e_{21}^T \vect F_{23} \vect
e_{31} = \vect e_{32}^T \vect F_{31} \vect e_{12} = 0,
\end{equation} where $e_{ij} = [\vect e_{ij}]$ is the epipole in image $i$
relative to the camera $j$~\cite[Theorem 15.6]{hartley2003multiple}. In most
practical situations fundamental matrices are estimated separately, so these
constraints need to be taken into account \cite{sweeney2015optimizing}.
However, it is sometimes incorrectly stated that compatibility for any set of
fundamental matrices only arises from triples and equations of the
form~\eqref{eq:compatible_triple}~\cite[Theorem 1]{rudi2010linear},
\cite[Definition 1]{sweeney2015optimizing}. While it is true that for a
complete set of $\binom{n}{2}$ fundamental matrices triple-wise compatibility
is sufficient to guarantee global compatibility, for smaller sets of
fundamental matrices other types of constraints will be necessary. For example,
there are many solvable viewing graphs with no three-cycles (\eg, the graph in
Figure~\ref{fig:solvable_9} with $n=5$), however the fundamental
matrices cannot be unconstrained if $7e(n) > 11n - 15$, which always true
unless $e = 2$ modulo $9$ (cf. Table~\ref{tab:e(n)}).

More formally, we can consider the set $\mathcal X$ of compatible fundamental
matrices between {\em all pairs} of $n$ views, so that $\mathcal X
\subset (\PP^8)^N$ where $N = \binom{n}{2}$. Since each compatible $N$-tuple
is associated with a camera configuration, we see that $\mathcal X$ has
dimension $11n - 15$. Given a viewing graph $G$ with $n$ views, we write
$\mathcal X_G \subset (\PP^8)^e$ for the projection of $\mathcal X$ onto the
factors in $(\PP^8)^N$ corresponding to the edges of $G$. The set $\mathcal
X_G$ thus represents compatible fundamental matrices for pairs of views
associated with the edges of $G$.
%  ($\mathcal X_G$ is an algebraic variety, and
% can be seen as image of the map $\varphi_G: (\PP^{11})^n \dashrightarrow
% (\PP^{8})^e$ considered in the proof of Proposition~\ref{prop:solvable_graph}).
The following result follows from dimensionality arguments (see
the supplementary material for a complete proof).

\begin{proposition}\label{prop:dim}
% The graph $G$ is solvable if and only if $p_G: \mathcal X
% \rightarrow \mathcal X_G$ is generically injective. In particular, I
If $G$ is solvable with $n$ vertices, $\mathcal X_G$ has dimension $11n-15$.
\end{proposition}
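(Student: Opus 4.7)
The plan is a dimension count using the parametrization of $\mathcal{X}$ by camera configurations. Let $\mathcal{C}_n$ denote the $(11n-15)$-dimensional space of configurations of $n$ cameras recalled in Section~2, and let $\phi: \mathcal{C}_n \to (\PP^8)^N$, with $N = \binom{n}{2}$, be the map sending a configuration to its tuple of pairwise fundamental matrices; this is well-defined because fundamental matrices are projectively invariant, and by construction $\mathcal{X} = \phi(\mathcal{C}_n)$. I would first note that $\dim \mathcal{X} = 11n - 15$: by Example~\ref{ex:aligned}, the complete graph $K_n$ is solvable for generic (non-collinear) pinholes, so $\phi$ is injective at a generic point of $\mathcal{C}_n$, and hence its image has the same dimension as its source.

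Now consider the coordinate projection $\pi_G: (\PP^8)^N \to (\PP^8)^e$ that retains only the factors indexed by edges of $G$, so that $\mathcal{X}_G = \pi_G(\mathcal{X})$. The hypothesis that $G$ is solvable says exactly that, for generic $(P_1,\ldots,P_n)$, the tuple $\mathcal{F}_G(P_1,\ldots,P_n)$ already determines the configuration, and hence the complete pairwise tuple $\phi(P_1,\ldots,P_n)$. Equivalently, $\pi_G$ restricted to $\mathcal{X}$ is generically one-to-one, so $\dim \mathcal{X}_G = \dim \mathcal{X} = 11n - 15$. The opposite inequality $\dim \mathcal{X}_G \le \dim \mathcal{X}$ is automatic since $\pi_G$ is a morphism, so the two estimates combine into the claimed equality.

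The main obstacle, and presumably the reason the proof is relegated to the supplementary material, is to make the step ``generic injectivity implies equality of dimension'' rigorous. The cleanest route is to work with Zariski closures inside $(\PP^8)^N$: both $\phi$ and $\pi_G|_{\mathcal{X}}$ extend to dominant morphisms of complex algebraic varieties (fundamental matrices are polynomial in the camera entries, and projective equivalence is an algebraic group action), and the theorem on dimension of fibers of a dominant morphism asserts that the dimension of the image equals that of the source as soon as the generic fiber is zero-dimensional. Both applications — first to $\phi$, then to $\pi_G|_{\mathcal{X}}$ — reduce precisely to the generic injectivity statements already established above, so no further input is needed beyond the definition of solvability and Example~\ref{ex:aligned}.
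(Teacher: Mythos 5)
Your overall strategy is the same as the paper's: a dimension count via the theorem on fibers of a dominant morphism. The packaging differs slightly — you factor through the full variety $\mathcal X$ and the coordinate projection $\pi_G$, whereas the paper works directly with the rational map $\varphi_G:(\PP^{11})^n\dashrightarrow(\PP^8)^e$ and observes that (finite) solvability forces its generic fiber to be exactly the $15$-dimensional $PGL(4)$-orbit, so that $\dim\mathcal X_G = 11n-15$. Your version is fine, though if you want to invoke the algebraic fiber-dimension theorem you are better off staying upstairs on $(\PP^{11})^n$ as the paper does, rather than treating the quotient manifold $\mathcal C_n$ as the source variety.

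There is, however, one genuine gap. Solvability is defined over $\RR$: it says that a generic \emph{real} tuple of fundamental matrices determines a unique \emph{real} configuration. The fiber-dimension theorem you invoke is a statement over an algebraically closed field, and it requires the generic \emph{complex} fiber of $\pi_G|_{\mathcal X}$ (equivalently, of $\varphi_G$) to be zero-dimensional. You assert that this ``reduces precisely to the generic injectivity statements already established,'' but those statements only control real points of the fibers; a priori the complex fiber over a real point could be positive-dimensional while containing a single real configuration. The paper closes exactly this gap in the supplementary material: using the second Bertini theorem, the generic fiber is smooth, and the implicit function theorem then shows that a smooth positive-dimensional complex fiber with a real point has a positive-dimensional set of real points — so real solvability would be violated on a dense set of fibers. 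You need this (or an equivalent) argument to legitimately pass from real uniqueness to finiteness of the complex generic fiber before applying the dimension theorem.
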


% We will describe a general way to compute the dimension of $\mathcal X_G$ in
If $\mathcal X_G$ has dimension $11n-15$, then the fundamental matrices
assigned to the edges of $G$ must satisfy $d(n,e) = 7e-11n+15$
constraints\footnote{This is the codimension of $\mathcal X_G$ in $\mathcal
H^e$ where $\mathcal H \subset \PP^8$ is the determinant hypersurface.}. This
also means that the minimum number of constraints on the fundamental matrices
associated with a solvable graph is $d(n) = d(n,e(n))$ (see
Table~\ref{tab:e(n)}).

We now use Proposition~\ref{prop:dim} to deduce a new necessary condition for solvability.

% \vspace{.2cm}

\begin{theorem}\label{thm:diff} Let $G$ be a solvable graph with $n$
vertices and $e$ edges. Then for any subgraph $G'$ of $G$ with $n'$ vertices
and $e'$ edges we must have
\begin{equation}\label{eq:diff} d(n',e') \le d(n,e),
\end{equation} where $d(n,e) = 7e-11n+15$. More generally, if $G_1,\ldots,
G_k$ are subgraphs of $G$, each with $n_i$ vertices and $e_i$ edges, with the
property that the edge sets $E_{G_i} \subset E_G$ are pairwise disjoint, then
we must have
\begin{equation}\label{eq:diff2}
\sum_{i=1}^k d(n_i,e_i) \le d(n,e).
\end{equation}
\end{theorem}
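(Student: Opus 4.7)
The plan is to exploit Proposition~\ref{prop:dim} together with a straightforward dimension count on projections of the compatibility variety $\mathcal X_G$. The key observation is that for \emph{any} graph $H$ on $m$ vertices, $\mathcal X_H$ is by construction the projection of the compatibility variety on $m$ cameras (which has dimension $11m-15$), so $\dim \mathcal X_H \le 11m - 15$ holds unconditionally, while by Proposition~\ref{prop:dim} the solvability of $G$ upgrades this to an equality $\dim \mathcal X_G = 11n - 15$.

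For the single-subgraph inequality \eqref{eq:diff}, I would consider the coordinate projection $\pi : \mathcal X_G \to \mathcal X_{G'}$ that forgets the $e - e'$ factors in $(\PP^8)^e$ corresponding to edges of $G \setminus G'$. Each fiber lies inside $\mathcal H^{e-e'}$, where $\mathcal H \subset \PP^8$ is the $7$-dimensional determinant hypersurface, so the fibers have dimension at most $7(e-e')$. Combining with the unconditional bound $\dim \mathcal X_{G'} \le 11n' - 15$, I obtain
\[
11n - 15 \;=\; \dim \mathcal X_G \;\le\; \dim \mathcal X_{G'} + 7(e-e') \;\le\; (11 n' - 15) + 7(e-e'),
\]
which after rearranging is exactly $7e' - 11 n' + 15 \le 7e - 11 n + 15$, i.e. $d(n',e') \le d(n,e)$.

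For the more general statement \eqref{eq:diff2}, the same idea applies almost verbatim, with the pairwise edge-disjointness of $G_1,\ldots,G_k$ being precisely what allows us to package the individual projections into a single coordinate projection
\[
\mathcal X_G \;\longrightarrow\; \prod_{i=1}^k \mathcal X_{G_i}
\]
that forgets the $e - \sum_i e_i$ edges of $G$ not used by any $G_i$. The fibers are again contained in a product of copies of $\mathcal H$, so they have dimension at most $7(e - \sum_i e_i)$. Using $\dim \mathcal X_{G_i} \le 11 n_i - 15$ on each factor, the inequality $\dim \mathcal X_G \le \sum_i \dim \mathcal X_{G_i} + 7(e-\sum_i e_i)$ rearranges into $\sum_i d(n_i,e_i) \le d(n,e)$.

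The only step that requires care — and that I would expect to be the main place a referee might push back — is the fiber-dimension bound: one must justify that a coordinate projection from $\mathcal X_G \subseteq (\PP^8)^e$ (or more precisely from $\mathcal X_G \subseteq \mathcal H^e$) onto a subset of the factors has fibers of dimension at most $7$ times the number of forgotten factors. This is the standard upper-semicontinuity bound $\dim X \le \dim \pi(X) + \max_y \dim \pi^{-1}(y)$ for a morphism of algebraic varieties, combined with the fact that each forgotten factor is constrained to $\mathcal H$, which has dimension $7$. Once this is in place, both \eqref{eq:diff} and \eqref{eq:diff2} follow by a purely arithmetic rearrangement.
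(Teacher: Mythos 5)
Your proof is correct and follows essentially the same route as the paper: both arguments rest on the inequality $\dim \mathcal X_G \le \dim \mathcal X_{G'} + 7(e-e')$ (which the paper asserts directly and you justify via the fiber-dimension bound inside $\mathcal H^{e-e'}$), combined with the unconditional bound $\dim \mathcal X_{G'} \le 11n'-15$ and the equality $\dim \mathcal X_G = 11n-15$ from Proposition~\ref{prop:dim}. The only cosmetic difference is that for~\eqref{eq:diff2} the paper first forms $G' = \bigl(\bigcup_i V_{G_i}, \bigcup_i E_{G_i}\bigr)$ and uses $\dim \mathcal X_{G'} \le \sum_i \dim \mathcal X_{G_i}$ before applying the first part, whereas you package everything into a single coordinate projection onto $\prod_i \mathcal X_{G_i}$; these are the same argument.
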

\begin{proof} Using the same notation as above, we note that that $\mathcal
X_{G'}$ is a projection of $\mathcal X_{G}$ onto $e'$ factors of $(\PP^8)^e$:
this implies $\dim \mathcal X_{G'} + 7(e-e') \geq \dim \mathcal X_G$, or $7e' -
\dim \mathcal X_{G'} \le 7e - \dim \mathcal X_{G}$. Since $\dim
\mathcal X_{G'} \le 11 n' - 15$ and $\dim \mathcal X_{G} = 11n - 15$ (because $G$ is solvable), we obtain
\[ 7e' - 11n' + 15 \le 7e' - \dim \mathcal X_{G'} \le 7e - \dim \mathcal
X_{G} = 7e - 11n + 15.\]
For the second statement, we consider the graph $G' = (\bigcup_i V_{G_i},
\bigcup_i E_{G_i})$. Since the edges of $G_i$ are disjoint, we have
\[\dim \mathcal X_{G'} \le  \sum_{i=1}^k \dim \mathcal X_{G_i} \le
\sum_{i=1}^k (11 n_i - 15),\] 
and $e' = \sum_{i}^k e_i$. The result
follows again from $7e' - \dim \mathcal X_{G'} \le 7e - \dim \mathcal
X_{G}$.
\hfill \qed
\end{proof}

\begin{example}\label{ex:levi} In~\cite{levi2003viewing}, Levi and Werman
observe that all viewing graphs of the form shown in
Figure~\ref{fig:4_non_solvable} are not solvable. This can be easily deduced
from Theorem~\ref{thm:diff}. Indeed, for a graph $G$ of this form, the
subgraphs $G_1,G_2,G_3,G_4$ have disjoint edges, however we have (using the
same notation as in the proof of Theorem~\ref{thm:diff})
\[
\sum_{i=1}^4 d(n_i, e_i) = d(n, e) - 4\times 11 + 3 \times 15> d(n,e).
\] 
According to Theorem~\ref{thm:diff} this means that $G$ is not solvable.
\hfill $\diamondsuit$
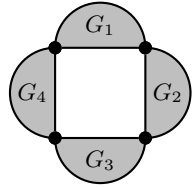
\begin{figure}[htbp]
    \centering
    \begin{tikzpicture}[x=1.2cm, y=1.2cm, step=1] %[label distance=1pt] , baseline=.5cm
%%% coords
\node[myNode](0) at (0,0){};
\node[myNode](1) at (0,1){};
\node[myNode](2) at (1,1){};
\node[myNode](3) at (1,0){};

%%% edges
% \draw[myEdge] (0) -- (1) -- (2) -- (3) -- (0); 

\draw [myEdge, fill = lightgray] (0) arc [radius=.5, start angle=270, end angle=90] -- (0);
\draw [myEdge, fill = lightgray] (1) arc [radius=.5, start angle=180, end angle=0] -- (1);
\draw [myEdge, fill = lightgray] (2) arc [radius=.5, start angle=450, end angle=270] -- (2);
\draw [myEdge, fill = lightgray] (3) arc [radius=.5, start angle=360, end angle=180] -- (3);

%%% coords
\node[myNode](0) at (0,0){};
\node[myNode](1) at (0,1){};
\node[myNode](2) at (1,1){};
\node[myNode](3) at (1,0){};

\node at (.5,1.25) {$G_1$};
\node at (1.25,.5) {$G_2$};
\node at (.5,-.25) {$G_3$};
\node at (-.25,.5) {$G_4$};
\end{tikzpicture}
    \caption{A viewing graph of this form (where $G_1, G_2, G_3, G_4$ represent
    arbitrary subgraphs) is not solvable}
    \label{fig:4_non_solvable}
\end{figure}
\end{example}

% \vspace{-.6cm}

\subsection{Constructive approach for verifying solvability}
\label{sec:constructive}

Until now we have mainly discussed necessary conditions for solvability, which
can be used to show that a given graph is not solvable. We next introduce a
general strategy for proving that a graph is solvable. This method is not
always guaranteed to work, but in practice it gives sufficient conditions
for most of the graphs we tested (cf. Section~\ref{sec:experiments}).

Recall from the beginning of this section that we introduced the ``viewing
closure'' $\overline G$ of $G$ as the graph obtained by adding to $G$ all
edges corresponding to fundamental matrices that can be deduced from $\mathcal
F_G(P_1,\ldots,P_n)$. Our approach consists of a series of ``moves'' which
describe valid ways to add new edges to a viewing graph. For this it is
convenient to introduce a new type of edge in the graph, which keeps track
of the fact that partial information about a fundamental matrix is available.
More precisely:

\begin{itemize}
    \item A {\em solid (undirected) edge} between vertices $i$ and $j$ means that
    the fundamental matrix between the views $i$ and $j$ is fixed (as before).
    \item A {\em directed dashed edge} (for short, a {\em dashed arrow})
    between vertices $i$ and $j$ means that the $i$-th epipole in the image $j$
    is fixed.
\end{itemize}

As these definitions suggest, a solid edge also counts as a dashed double-
arrow, but the converse is not true. We next introduce three basic ``moves''
(cf.~Figure~\ref{fig:moves}).

\begin{enumerate}[(I)]
    \item If there are solid edges defining a four-cycle with one diagonal,
    draw the other diagonal.
    \item If there are dashed arrows $1\rightarrow 2$, $1\rightarrow
    3$, and solid edges $2 - 4$ and $3 - 4$, draw a dashed arrow $1
    \rightarrow 4$.
    \item If there are double dashed arrows $1 \leftrightarrow 2$, together
    with three pairs of dashed arrows $i \rightarrow 1, i \rightarrow
    2$ for $i = 3,4,5$, make the arrow between $1$ and $2$ a solid (undirected) edge.
\end{enumerate}
    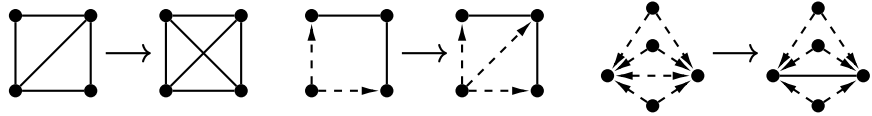
\begin{figure}[htbp]
        \centering
        \begin{tikzpicture}[x=1cm, y=1cm, step=1, baseline=.5cm] %[label distance=1pt]
%%% coords
\node [myNode](0) at (0,0){};
\node [myNode](1) at (0,1){};
\node [myNode](2) at (1,1){};
\node [myNode](3) at (1,0){};

%% edges
\draw[myEdge] (0) -- (1) -- (2) -- (3) -- (0) -- (2); 

%% arrow
\draw[thick,->] (1.2,.5) -- (1.8,.5);

%% second square
\node [shift={(2 cm, 0 cm)}, myNode](0) at (0,0){};
\node [shift={(2 cm, 0 cm)}, myNode](1) at (0,1){};
\node [shift={(2 cm, 0 cm)}, myNode](2) at (1,1){};
\node [shift={(2 cm, 0 cm)}, myNode](3) at (1,0){};

%%% edges
\draw[myEdge] (0) -- (1) -- (2) -- (3) -- (0);
\draw[myEdge] (0) -- (2); 
\draw[myEdge] (1) -- (3);

\end{tikzpicture}\qquad
        \begin{tikzpicture}[x=1cm, y=1cm, step=1, baseline=.5cm] %[label distance=1pt]

%%% coords
\node [myNode](0) at (0,0){};
\node [myNode](1) at (0,1){};
\node [myNode](2) at (1,1){};
\node [myNode](3) at (1,0){};

\draw[myEdge, dashed, -myTip] (0) -- (1); 
\draw[myEdge, dashed, -myTip] (0) -- (3);
\draw[myEdge] (1) -- (2);
\draw[myEdge] (3) -- (2);

%% arrow

\draw[thick,->] (1.2,.5) -- (1.8,.5);

%% second square

\node [shift={(2 cm, 0 cm)}, myNode](0) at (0,0){};
\node [shift={(2 cm, 0 cm)}, myNode](1) at (0,1){};
\node [shift={(2 cm, 0 cm)}, myNode](2) at (1,1){};
\node [shift={(2 cm, 0 cm)}, myNode](3) at (1,0){};

%%% edges
\draw[myEdge, dashed, -myTip] (0) -- (1); 
\draw[myEdge, dashed, -myTip] (0) -- (3);
\draw[myEdge, dashed, -myTip] (0) -- (2);
\draw[myEdge] (1) -- (2);
\draw[myEdge] (3) -- (2);
\end{tikzpicture}\qquad
        \begin{tikzpicture}[x=1cm, y=1cm, step=1, baseline=.5cm] %[label distance=1pt]

%%% coords
\node [myNode](0) at (-.6,.2){};
\node [myNode](1) at (.6,.2){};
\node [myNode](2) at (0,.6){};
\node [myNode](3) at (0,1.1){};
\node [myNode](4) at (0,-.2){};

\draw[myEdge, dashed, myTip-myTip] (0) -- (1); 
\draw[myEdge, dashed, -myTip] (2) -- (1);
\draw[myEdge, dashed, -myTip] (2) -- (0);
\draw[myEdge, dashed, -myTip] (3) -- (1);
\draw[myEdge, dashed, -myTip] (3) -- (0);
\draw[myEdge, dashed, -myTip] (4) -- (1);
\draw[myEdge, dashed, -myTip] (4) -- (0);

%% arrow

\draw[thick, ->] (.8,.5) -- (1.4,.5);

%% second square

\node [shift={(2.2 cm, 0 cm)}, myNode](0) at (-.6,.2){};
\node [shift={(2.2 cm, 0 cm)}, myNode](1) at (.6,.2){};
\node [shift={(2.2 cm, 0 cm)}, myNode](2) at (0,.6){};
\node [shift={(2.2 cm, 0 cm)}, myNode](3) at (0,1.1){};
\node [shift={(2.2 cm, 0 cm)}, myNode](4) at (0,-.2){};

\draw[myEdge, -] (0) -- (1); 
\draw[myEdge, dashed, -myTip] (2) -- (1);
\draw[myEdge, dashed, -myTip] (2) -- (0);
\draw[myEdge, dashed, -myTip] (3) -- (1);
\draw[myEdge, dashed, -myTip] (3) -- (0);
\draw[myEdge, dashed, -myTip] (4) -- (1);
\draw[myEdge, dashed, -myTip] (4) -- (0);

\end{tikzpicture}
        \caption{Three moves (left: I, center: II, right: III) that can be
        used to prove solvability}
        \label{fig:moves}
    \end{figure}

\begin{theorem}\label{thm:moves} Let $G$ be a viewing graph. If applying the
three moves described above iteratively to $G$ we obtain a complete graph,
then $G$ is solvable.
\end{theorem}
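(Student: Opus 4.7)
The plan is to prove each of the three moves is \emph{sound} in the sense that it preserves the invariant: the data attached to every edge is uniquely determined by $\mathcal F_G(P_1,\dots,P_n)$ for generic $P_1,\dots,P_n$. Concretely, a solid edge $i$--$j$ will mean that $F_{ij}$ is determined by $\mathcal F_G$, and a dashed arrow $i\to j$ will mean that the epipole $P_j c_i$ (the ``$i$-th epipole in image $j$'') is determined by $\mathcal F_G$. Given soundness of all three moves, a straightforward induction on the number of applied moves shows that every edge of the final graph carries data determined by $\mathcal F_G$. If that final graph is complete, then all $\binom{n}{2}$ fundamental matrices are pinned down, and since the complete graph on generic pinholes is solvable (Example~\ref{ex:aligned}), so is $G$.

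For Move (I), the five solid edges on $\{1,2,3,4\}$ form two triangles glued along the diagonal $1$--$3$. Each triangle is a solvable three-cycle, so by the gluing construction of Proposition~\ref{prop:construction} the resulting four-vertex, five-edge graph is itself solvable. Thus the cameras $P_1,\dots,P_4$ are recovered up to a global projective transformation, and in particular $F_{24}$ is determined. For Move (II), the arrows $1\to 2$ and $1\to 3$ fix the points $P_2 c_1$ and $P_3 c_1$ in images $2$ and $3$. The solid edge $2$--$4$ then determines the epipolar line in image $4$ associated with $P_2 c_1$; geometrically this is the projection into camera $4$ of the ray through $c_1$ and $c_2$, so it passes through $P_4 c_1$. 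Similarly $F_{34}$ and $P_3 c_1$ yield a second line in image $4$ through $P_4 c_1$. Generically the two lines are distinct and meet in a unique point, which must therefore be $P_4 c_1$---exactly the datum recorded by the new arrow $1\to 4$.

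For Move (III), the double arrow $1\leftrightarrow 2$ fixes the epipoles $P_1 c_2$ and $P_2 c_1$, accounting for $4$ of the $7$ degrees of freedom of $F_{12}$. Each pair of arrows $i\to 1$, $i\to 2$ for $i\in\{3,4,5\}$ fixes both $P_1 c_i$ and $P_2 c_i$, which are corresponding image points of the single 3D point $c_i$. We thus have three point correspondences between images $1$ and $2$ in addition to the two known epipoles, and by the fact recalled at the end of Section~2 (two epipoles together with three point correspondences determine a fundamental matrix) this uniquely specifies $F_{12}$, so the arrow between $1$ and $2$ may be upgraded to a solid edge.

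The main obstacle will be making all of this work \emph{generically}. Each verification relies on an open non-degeneracy condition: non-collinearity of the pinholes $c_1,\dots,c_4$ in Move (I); distinctness of the two epipolar lines in image $4$ in Move (II); and general position of the three correspondences in the two images in Move (III). I would phrase each of these as an explicit Zariski-open condition on the configuration space, check that it holds for generic cameras, and observe that only finitely many such conditions are invoked along the finite sequence of moves; their intersection therefore remains a non-empty Zariski-open set, and the induction closes cleanly.
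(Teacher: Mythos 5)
Your proposal is correct and follows essentially the same route as the paper's proof: Move I via the gluing construction of Proposition~\ref{prop:construction} applied to two triangles sharing a diagonal, Move II via transferring the known epipoles to two generically distinct epipolar lines in image $4$ whose intersection is $P_4c_1$, and Move III via the fact that two epipoles plus three point correspondences (the images of $c_3,c_4,c_5$) pin down the epipolar line homography and hence $F_{12}$. Your explicit treatment of the induction over moves and the intersection of finitely many Zariski-open genericity conditions is a welcome elaboration of what the paper leaves implicit, but it is not a different argument.
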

\begin{proof} 
For each of the three moves we need to show that the new
edges contain information about the unknown fundamental matrices that can
actually be deduced from $\mathcal F_{G}(P_1,\ldots,P_n)$.
\begin{description}
    \item \underline{Move I}: The second diagonal of the square is deducible
    from the other edges because the square with one diagonal is a solvable
    graph (this is a simple consequence of Proposition~\ref{prop:construction}).
    \item \underline{Move II}: Assume that $e_{21} = P_2c_1$ and $e_{32} = P_3
    c_2$ are fixed epipoles in images $2$ and $3$, and that the fundamental
    matrices $F_{24} = F(P_2,P_4), F_{34} = F(P_3,P_4)$ are also fixed. If
    $c_1, c_2, c_4$ are not aligned, we can use $F_{24}$ to ``transfer'' the
    point $e_{21}$, and obtain a line $l_{41}$ in image $4$ that contains
    epipole $e_{41}$.
    Similarly, if $c_1, c_3, c_4$ are not aligned, we obtain another line
    $m_{41}$ using the same procedure with $F_{34}$ and $e_{31}$. If the
    pinholes $c_1,c_2,c_3,c_4$ are not all coplanar, the lines $l_{41}$ and
    $m_{41}$ will be distinct, and their point of intersection will be
    $e_{41}$. This implies that we can draw a dashed arrow from $1$ to $4$.
    \item \underline{Move III}: Assume that the epipoles $e_{21}$ and $e_{12}$
    are fixed, and that the images of three other pinholes $c_3,c_4,c_5$ are
    fixed in both images $1$ and $2$. If the planes $c_1,c_2,c_i$ for
    $i=3,4,5$ are distinct, then the images of $c_3,c_4,c_5$ give three
    correspondences that fix the epipolar line homography. This completely
    determines $F_{12}$, and we can draw a solid edge between $1$ and $2$.
    \hfill \qed
\end{description}
\end{proof}

In practice, the three moves can be applied cyclically until no new edges can
be added (it is also easy to argue the order is irrelevant, because we are
simply annotating information that is always deducible from the graph). 
Finally, we note that all three moves are {\em constructive} and {\em
linear}, meaning they actually provide a practical strategy for computing
all fundamental matrices: it is enough to transfer epipoles appropriately,
and use them to impose linear conditions on the unknown fundamental matrices.

\begin{example} Using Theorem~\ref{thm:moves}, we can show that all graphs from
Figure~\ref{fig:solvable_9} are solvable. Figure~\ref{fig:solvable_moves}
illustrates this explicitly for two cases ($n=6$ and $n=8$).
\hfill $\diamondsuit$
\end{example}

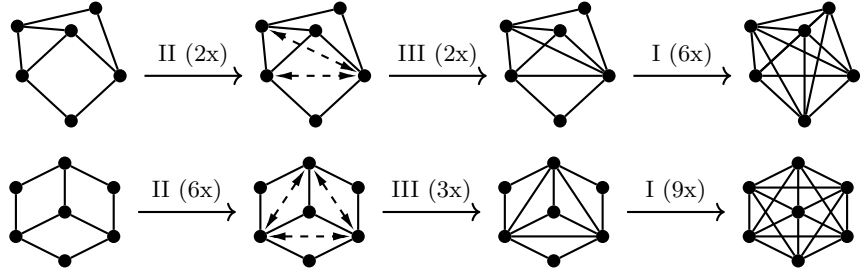
\begin{figure}[htbp]
    \centering
    \begin{tikzpicture}[x=.65cm, y=.6cm, step=1] %[label distance=1pt] , baseline=-.2cm
%%% coords
\node[myNode](0) at (0,-1){};
\node[myNode](1) at (-1,0){};
\node[myNode](2) at (0,1){};
\node[myNode](3) at (1,0){};
\node[myNode](4) at (-1.1,1.1){};
\node[myNode](5) at (.5,1.5){};
%%% edges
\draw[myEdge] (0) -- (1) -- (2) -- (3) -- (0); 
\draw[myEdge] (1) -- (4) -- (2); 
\draw[myEdge] (4) -- (5) -- (3);

\draw[thick, ->] (1.5,0) -- (2.5,0) node[above]{II (2x)} -- (3.5,0);

\node[shift={(5, 0)}, myNode](0) at (0,-1){};
\node[shift={(5, 0)}, myNode](1) at (-1,0){};
\node[shift={(5, 0)}, myNode](2) at (0,1){};
\node[shift={(5, 0)}, myNode](3) at (1,0){};
\node[shift={(5, 0)}, myNode](4) at (-1.1,1.1){};
\node[shift={(5, 0)}, myNode](5) at (.5,1.5){};

%%% edges
\draw[myEdge] (0) -- (1) -- (2) -- (3) -- (0); 
\draw[myEdge] (1) -- (4) -- (2); 
\draw[myEdge] (4) -- (5) -- (3); 

\draw[myEdge, dashed, myTip-myTip] (1) -- (3);
\draw[myEdge, dashed, myTip-myTip] (3) -- (4);

\draw[shift={(5, 0)}, thick, ->] (1.5,0) -- (2.5,0) node[above]{III (2x)} -- (3.5,0);

\node[shift={(10, 0)}, myNode](0) at (0,-1){};
\node[shift={(10, 0)}, myNode](1) at (-1,0){};
\node[shift={(10, 0)}, myNode](2) at (0,1){};
\node[shift={(10, 0)}, myNode](3) at (1,0){};
\node[shift={(10, 0)}, myNode](4) at (-1.1,1.1){};
\node[shift={(10, 0)}, myNode](5) at (.5,1.5){};

%%% edges
\draw[myEdge] (0) -- (1) -- (2) -- (3) -- (0); 
\draw[myEdge] (1) -- (4) -- (2); 
\draw[myEdge] (4) -- (5) -- (3); 

\draw[myEdge] (1) -- (3);
\draw[myEdge] (3) -- (4);

\draw[shift={(10, 0)}, thick, ->] (1.5,0) -- (2.5,0) node[above]{I (6x)} -- (3.5,0);

\node[shift={(15, 0)}, myNode](0) at (0,-1){};
\node[shift={(15, 0)}, myNode](1) at (-1,0){};
\node[shift={(15, 0)}, myNode](2) at (0,1){};
\node[shift={(15, 0)}, myNode](3) at (1,0){};
\node[shift={(15, 0)}, myNode](4) at (-1.1,1.1){};
\node[shift={(15, 0)}, myNode](5) at (.5,1.5){};

%%% edges
\draw[myEdge] (0) -- (1);
\draw[myEdge] (2) -- (3);
\draw[myEdge] (3) -- (0); 
\draw[myEdge] (1) -- (4); 
\draw[myEdge] (4) -- (2); 
\draw[myEdge] (4) -- (5);
\draw[myEdge] (5) -- (3); 
\draw[myEdge] (1) -- (3);
\draw[myEdge] (3) -- (4);

\draw[myEdge] (0) -- (2);
\draw[myEdge] (0) -- (4);
\draw[myEdge] (0) -- (5);

\draw[myEdge] (1) -- (2);
\draw[myEdge] (1) -- (5);

\draw[myEdge] (2) -- (5);

\end{tikzpicture}\\[.35cm]
    \begin{tikzpicture}[x=.65cm, y=.65cm, step=1] %[label distance=1pt] , baseline=-.2cm
%%% coords
\node[myNode](0) at (0,0){};
\node[myNode](1) at (0,1){};
\node[myNode](2) at (-1,.5){};
\node[myNode](3) at (-1,-.5){};
\node[myNode](4) at (0,-1){};
\node[myNode](5) at (1,-.5){};
\node[myNode](6) at (1,.5){};
%%% edges
\draw[myEdge] (0) -- (1) -- (2) -- (3) -- (0);
\draw[myEdge] (0) -- (5) -- (4) -- (3); 
\draw[myEdge] (1) -- (6) -- (5);

\draw[thick, ->] (1.5,0) -- (2.5,0) node[above]{II (6x)} -- (3.5,0);

\node[shift={(5, 0)}, myNode](0) at (0,0){};
\node[shift={(5, 0)}, myNode](1) at (0,1){};
\node[shift={(5, 0)}, myNode](2) at (-1,.5){};
\node[shift={(5, 0)}, myNode](3) at (-1,-.5){};
\node[shift={(5, 0)}, myNode](4) at (0,-1){};
\node[shift={(5, 0)}, myNode](5) at (1,-.5){};
\node[shift={(5, 0)}, myNode](6) at (1,.5){};

%%% edges
\draw[myEdge] (0) -- (1) -- (2) -- (3) -- (0);
\draw[myEdge] (0) -- (5) -- (4) -- (3); 
\draw[myEdge] (1) -- (6) -- (5);
\draw[myEdge, dashed, myTip-myTip] (1) -- (3);
\draw[myEdge, dashed, myTip-myTip] (3) -- (5);
\draw[myEdge, dashed, myTip-myTip] (5) -- (1);

\draw[shift={(5, 0)}, thick, ->] (1.5,0) -- (2.5,0) node[above]{III (3x)} -- (3.5,0);

\node[shift={(10, 0)}, myNode](0) at (0,0){};
\node[shift={(10, 0)}, myNode](1) at (0,1){};
\node[shift={(10, 0)}, myNode](2) at (-1,.5){};
\node[shift={(10, 0)}, myNode](3) at (-1,-.5){};
\node[shift={(10, 0)}, myNode](4) at (0,-1){};
\node[shift={(10, 0)}, myNode](5) at (1,-.5){};
\node[shift={(10, 0)}, myNode](6) at (1,.5){};

%%% edges
\draw[myEdge] (0) -- (1) -- (2) -- (3) -- (0);
\draw[myEdge] (0) -- (5) -- (4) -- (3); 
\draw[myEdge] (1) -- (6) -- (5);
\draw[myEdge] (1) -- (3);
\draw[myEdge] (3) -- (5);
\draw[myEdge] (5) -- (1);

\draw[shift={(10, 0)}, thick, ->] (1.5,0) -- (2.5,0) node[above]{I (9x)} -- (3.5,0);

\node[shift={(15, 0)}, myNode](0) at (0,0){};
\node[shift={(15, 0)}, myNode](1) at (0,1){};
\node[shift={(15, 0)}, myNode](2) at (-1,.5){};
\node[shift={(15, 0)}, myNode](3) at (-1,-.5){};
\node[shift={(15, 0)}, myNode](4) at (0,-1){};
\node[shift={(15, 0)}, myNode](5) at (1,-.5){};
\node[shift={(15, 0)}, myNode](6) at (1,.5){};

%%% edges
\draw[myEdge] (0) -- (1) -- (2) -- (3) -- (0);
\draw[myEdge] (0) -- (5) -- (4) -- (3); 
\draw[myEdge] (1) -- (6) -- (5);
\draw[myEdge] (1) -- (3);
\draw[myEdge] (3) -- (5);
\draw[myEdge] (5) -- (1);
\draw[myEdge] (0) -- (2);
\draw[myEdge] (0) -- (4);
\draw[myEdge] (0) -- (6);
\draw[myEdge] (2) -- (4) -- (6) -- (2);

\end{tikzpicture}
    \caption{Two applications of Theorem~\ref{thm:moves} to prove that viewing graphs are solvable}
    \label{fig:solvable_moves}
\end{figure}

\section{Algebraic tests for solvability and finite solvability}
\label{sec:linear}

Given a viewing graph $G$, it is possible to write down a set of algebraic
conditions that will in principle always determine whether $G$ is solvable. One
way to do this is by characterizing the set of projective transformations of
$\PP^3$ that can be applied to all cameras without affecting any of the
fundamental matrices represented by the edges of the viewing graph. More
precisely, since every pair of vertices connected by an edge represents a
projectively rigid pair of cameras, we assign a matrix $\vect g_{\lambda}$ in
$GL(4,\RR)$ to each edge $\lambda$ of the graph (so $\vect g_{\lambda}$
describes a projective transformation applied to a pair of cameras). We then
impose that matrices on adjacent edges act compatibly on the shared
vertex/camera. If the edges $\lambda$ and $\lambda'$ share a vertex $i$, then
from~\eqref{eq:stab_P} we see that this compatibility can be written as
\begin{equation}\label{eq:inv_stab}
\vect g_{\lambda} \vect g_{\lambda'}^{-1} = \alpha \vect I_4 + \vect c_i \vect
v^T,
\end{equation} where $\alpha$ is an arbitrary (nonzero) constant and $\vect v$
is an arbitrary vector. Thus, if $G$ is a viewing graph with $e$ edges
and $c_1,\ldots, c_n$ are a set of pinholes, we consider the set of all
compatible assignments of matrices:
\begin{small}
\[
\mathcal T_G({c_1,\ldots,c_n}) = \{(\vect g_{\lambda}, \lambda \in E_G) \, |
\, \eqref{eq:inv_stab} \mbox{ holds for all adjacent edges in G}\} \subset
GL(4,\RR)^e.
\] 
\end{small}
If $G$ is solvable, then for general $c_1,\ldots,c_n$ the set
$\mathcal T_G({c_1,\ldots,c_n})$ will consist of $e$-tuples of matrices that
are all scalar multiples of each other. This in fact means that the only way to
act on all cameras without affecting the fixed fundamental matrices is to apply
a single projective transformation.

By substituting random pinholes in~\eqref{eq:inv_stab}, we can use these
equations for $\mathcal T_G({c_1,\ldots,c_n})$ as an algebraic test for
verifying whether a viewing graph is solvable. This approach however is
computationally very challenging, since it requires solving a non-linear
algebraic system with a large number of variables. On the other hand, if we are
only interested in the {\em dimension} of $\mathcal T_G({c_1,\ldots,c_n})$,
then we can use a much simpler strategy: noting that $\mathcal
T_G({c_1,\ldots,c_n})$ may be viewed as an algebraic group (it is a subgroup of
$GL(4,\RR)^e$), it is sufficient to compute the dimension of its \emph{tangent
space} at any point, and in particular at the identity (\ie, the product of
identity matrices).\footnote{Here we actually need that $\mathcal
T_G({c_1,\ldots,c_n})$ is smooth: this follows from a technical result, which
states that an algebraic group (more properly a ``group scheme'') over a field
of characteristic zero is always smooth~\cite[Sec.11]{mumford2008abelian}.} An
explicit representation of the tangent space of $\mathcal
T_G({c_1,\ldots,c_n})$ is provided by the following result (see the
supplementary material for a proof).

\begin{proposition} The tangent space of $\mathcal T_G({c_1,\ldots,c_n})$ at
the identity can be represented as the space of $e$-tuple of matrices $(\vect
h_{\lambda}, \, \lambda
\in E_G)$ where each $\vect h_{\lambda}$ is in $\RR^{4 \times 4}$ (not
necessary invertible), and with compatibility conditions of the form
\begin{equation}\label{eq:comp_linear}
\vect h_{\lambda} - \vect h_{\lambda'} = \alpha \vect I_4 + \vect c_i \vect
v^T,
\end{equation} where $\alpha \in \RR \setminus \{0\}$ and $\vect v \in \RR^4$
are arbitrary, and $\lambda$ and $\lambda'$ share the vertex $i$.
\end{proposition}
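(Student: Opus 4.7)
The plan is to linearize the defining relations~\eqref{eq:inv_stab} along smooth curves through the identity in $\mathcal T_G(c_1,\ldots,c_n)$, then appeal to smoothness of $\mathcal T_G$ to conclude that the linearization cuts out the full tangent space. Since the identity point corresponds to $\vect g_\lambda = \vect I_4$ for every edge $\lambda$, I would parametrize a curve by $\vect g_\lambda(t) = \vect I_4 + t\vect h_\lambda + O(t^2)$ and seek conditions on the tuple $(\vect h_\lambda)$.

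For each pair of adjacent edges $\lambda,\lambda'$ sharing a vertex $i$, compatibility~\eqref{eq:inv_stab} must hold along the curve: there exist smooth scalar and vector functions with $\vect g_\lambda(t)\vect g_{\lambda'}(t)^{-1} = \alpha(t)\vect I_4 + \vect c_i \vect v(t)^T$. Evaluating at $t=0$ forces $\alpha(0)=1$ and $\vect v(0)=\vect 0$, so I can write $\alpha(t)=1+t\alpha_1+O(t^2)$ and $\vect v(t)=t\vect v_1+O(t^2)$. Combined with the expansion $\vect g_{\lambda'}(t)^{-1} = \vect I_4 - t\vect h_{\lambda'} + O(t^2)$, matching the coefficients of $t$ on both sides yields
\[
\vect h_\lambda - \vect h_{\lambda'} = \alpha_1 \vect I_4 + \vect c_i \vect v_1^T,
\]
which (after relabeling $\alpha_1 \to \alpha$, $\vect v_1 \to \vect v$) is exactly~\eqref{eq:comp_linear}. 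This direction shows that every tangent vector at the identity satisfies~\eqref{eq:comp_linear}.

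For the reverse inclusion I would invoke the smoothness of $\mathcal T_G$ as an algebraic group in characteristic zero (the fact cited in the footnote). Smoothness guarantees that the Zariski tangent space at the identity equals the kernel of the Jacobian of the polynomial equations defining $\mathcal T_G$ inside $GL(4,\RR)^e$, and the computation above identifies that kernel with the linear space of tuples $(\vect h_\lambda)$ satisfying~\eqref{eq:comp_linear}. Hence no further constraints arise and this linear space is the full tangent space.

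The main obstacle is this reverse inclusion. The forward linearization is a routine first-order Taylor expansion; the substantive point is that every $(\vect h_\lambda)$ obeying~\eqref{eq:comp_linear} actually integrates to a genuine curve in $\mathcal T_G$. The cleanest resolution is the smoothness of the ambient group scheme, which the paper has already committed to. A more hands-on alternative would be to exponentiate an explicit Lie-algebra element and verify that the resulting one-parameter family lies in $\mathcal T_G$, but this is unnecessary given the cited smoothness result.
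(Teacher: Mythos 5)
Your proposal is correct and follows essentially the same route as the paper: both proofs compute the first-order expansion of $\vect g_{\lambda}\vect g_{\lambda'}^{-1}$ at the identity to obtain $\vect h_{\lambda}-\vect h_{\lambda'}$, and both lean on the characteristic-zero smoothness of the group scheme so that this linearization gives the full tangent space. The only cosmetic difference is that the paper first eliminates $\alpha$ and $\vect v$ to rewrite each constraint as explicit linear equations $L_{\vect c_i}(\vect g_{\lambda}\vect g_{\lambda'}^{-1})=\vect 0$ (its Proposition on the form of $\alpha\vect I_4+\vect c\vect v^T$) and then linearizes, whereas you carry $\alpha(t)$ and $\vect v(t)$ along the curve and expand them too; the resulting conditions are identical.
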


When the pinholes have been fixed, the compatibility
constraints~\eqref{eq:comp_linear} can be expressed as {\em linear} equations
in the entries of the matrices $\vect h_{\lambda}$. These equations are
obtained by eliminating the variables $\alpha$ and $\vect v$
from~\eqref{eq:comp_linear}. The resulting conditions in terms of $\vect
h_{\lambda}, \vect h_{\lambda'}, \vect c_i$ are rather simple, and listed
explicitly in the supplementary material. Using this approach, the dimension of
$\mathcal T_G({c_1,\ldots,c_n})$ is easy to determine: it is enough to fix the
pinholes randomly, and compute the dimension of the induced linear system.

When $\mathcal T_G({c_1,\ldots,c_n})$ has dimension $d = 15 + e$ (which
accounts for the group of projective transformations, and scale factors for
each matrix $\vect g_{\lambda}$), we deduce that there are at most a {\em
finite} number of projectively inequivalent ways in which we can act on all the
cameras without affecting the fixed fundamental matrices. In other words, the
fundamental matrices associated with the edges of $G$ determine at most a
finite set of camera configurations (rather than a single configuration, which
is our definition for solvability). When this happens, we say that $G$ is
{\em finite solvable}. On the other hand, we were not able to find an example
of a finite solvable graph that is provably not solvable, nor to find
a proof that ``finite solvability'' implies ``solvability''. To our knowledge,
whether a set of fundamental matrices can characterize a finite number of
configurations, but more than a single one, is a question that has never been
addressed.

% \vspace{.1cm}

% \paragraph{Open Question:} 
\begin{open} Is it possible for a viewing graph to be
finite solvable without being solvable?
\end{open}

% \vspace{.1cm}

Our experiments show that this behavior does not occur for a small number of
vertices, but we see no reason why this should be true for larger graphs. This
is certainly an important issue that we hope to investigate in the future.

\section{Experiments and examples}
\label{sec:experiments}

We have implemented and tested all of the discussed criteria and methods using
the free mathematical software {\tt SageMath}~\cite{sagemath}.% , which includes
% symbolic algebra tools as well as built-in functions for manipulating
% graphs.
\footnote{Our code is available at
\url{https://github.com/mtrager/viewing-graphs}.} We then analyzed solvability for
all minimal viewing graphs with $n
\le 9$ vertices and $e(n) = \lceil (11n-15)/7 \rceil$ edges. The results are
summarized in Table~\ref{tab:experiments}. For every pair $(n,e(n))$, we list
the number of all non-isomorphic connected graphs of that size (``connected''),
the number of graphs that satisfy the necessary condition from
Theorem~\ref{thm:diff} (``candidates''), the number of those that satisfy the
sufficient condition from Theorem~\ref{thm:moves} (``solvable with moves''),
and the number of graphs that are finite solvable (``finite solvable''), using
the linear method from~\ref{sec:linear}. We see that Theorems~\ref{thm:diff}
and~\ref{thm:moves} allow us to recover all minimal solvable graphs for $n \le
7$, since candidate graphs are always solvable with moves. On the other hand,
for $n=8$, and particularly for the unconstrained case $n=9$, there are some
graphs that we could not classify with those methods (although finite 
solvability was easy to verify in all cases). For the undecided graphs, we were
sometimes able to prove solvability with the general algebraic method from
Section~\ref{sec:linear}, or using other arguments.  The following examples
present a few interesting cases.
\begin{table}
\centering
\label{tab:experiments}
\caption{Solvability of minimal viewing graphs using our methods}
\scriptsize
\begin{tabular}{@{} l c c c c c c c @{}}
\toprule
$(n,e(n))$ & (3,3) & (4,5) & (5,6) & (6,8) & (7,9) & (8,11) & (9,12) \\ \midrule
connected & 1 & 1 & 5 & 22 & 107 & 814 & 4495\\
candidates & 1 & 1 & 1 & 4 & 3 & 36 & 28\\
solvable with moves & 1 & 1 & 1 & 4 & 3 & 31 & 5\\
finite solvable & 1 & 1 & 1 & 4 & 3 & 36 & 27\\
\bottomrule
\end{tabular}
\end{table}
\begin{example}\label{ex:8dotted} The graph shown in
Figure~\ref{fig:examples_interesting} (left) is one of the five cases with
$n=8$, $e=11$ that are ``candidates'' but are not ``solvable with moves''.
However, we can show that this graph is actually solvable by arguing that the
image of the pinhole $1$ in the view $7$ is fixed, even if this is not a
consequence of the moves of Theorem~\ref{thm:moves} (this is represented by the
gray dashed arrow in the figure). To prove this fact, one needs to keep track
of more information, and record also when an epipole is constrained to a line
(rather than only when an epipole is fixed, which is the purpose of dashed
edges).\footnote{This information can be taken into account by defining a
new type of edge together with additional moves. We did not do this in
Theorem~\ref{thm:moves}
% , because it would have made the statement more
% complicated, and 
because this type of edge is never necessary for smaller
graphs.} After drawing the dashed arrow from $1$ to $7$, solvability
can be shown using the moves from Theorem~\ref{thm:moves}.
\hfill $\diamondsuit$
\end{example}

\begin{example}\label{ex:9positive_dim} The graph shown in
Figure~\ref{fig:examples_interesting} (center) is the only viewing graph with
$n=9$ and $e=12$ that is ``candidate'' but is not ``finite solvable''. The fact
that it is not finite solvable can also deduced without computations. Indeed,
any finite solvable viewing graph of this size cannot impose any constraints on
the fundamental matrices associated with its edges (this is because
$d(9,12)=7\times 12 - 11\times 9 + 15 = 0$). However, the image of the pinhole
$7$ in the view $2$ is over-constrained, because we can draw a dashed arrow
$7\rightarrow 2$ using move II for two distinct four-cycles (($7,1,2,5$) and
($7,3,2,5$)). This implies that the fundamental matrices associated with the
edges of the graph cannot be arbitrary.~\hfill $\diamondsuit$
\end{example}

\begin{example}\label{ex:solvable_argument} The graph shown in
Figure~\ref{fig:examples_interesting} (right) is not ``solvable with moves'',
however one can show that it is solvable: indeed, the general algebraic
compatibility equations from Section~\ref{sec:linear} are in this case simple
and can be solved explicitly (see the supplementary material for
the computations). The fundamental matrices associated with the edges of the
graph are unconstrained, % which means that
so $12$ arbitrary fundamental matrices
determine a unique configuration of $9$~cameras.~\hfill
$\diamondsuit$
\end{example}
% \vspace{-.3cm}
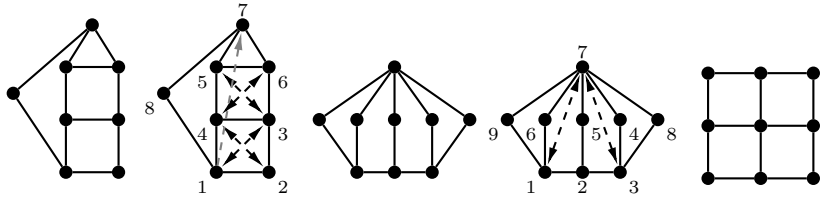
\begin{figure}[htbp]
    \centering
    \begin{tikzpicture}[x=.7cm, y=.7cm, step=1] %[label distance=1pt]
%%% coords
\node[myNode](1) at (0,0){};
\node[myNode](2) at (1,0){};
\node[myNode](3) at (1,1){};
\node[myNode](4) at (0,1){};
\node[myNode](5) at (0,2){};
\node[myNode](6) at (1,2){};
\node[myNode](7) at (0.5,2.8){};
\node[myNode](8) at (-1,1.5){};
%%% edges
\draw[myEdge] (1) -- (2) -- (3) -- (4) -- (1);
\draw[myEdge] (4) -- (5) -- (6) -- (3); 
\draw[myEdge] (5) -- (7) -- (6); 
\draw[myEdge] (7) -- (8) -- (1);

% \draw[thick,->] (1.2,.5) -- (1.8,.5);

%%% coords
\node[shift={(2 cm, 0 cm)}, myNode](1) at (0,0){};
\node[shift={(2 cm, 0 cm)}, myNode](2) at (1,0){};
\node[shift={(2 cm, 0 cm)}, myNode](3) at (1,1){};
\node[shift={(2 cm, 0 cm)}, myNode](4) at (0,1){};
\node[shift={(2 cm, 0 cm)}, myNode](5) at (0,2){};
\node[shift={(2 cm, 0 cm)}, myNode](6) at (1,2){};
\node[shift={(2 cm, 0 cm)}, myNode](7) at (0.5,2.8){};
\node[shift={(2 cm, 0 cm)}, myNode](8) at (-1,1.5){};
%%% edges
\draw[myEdge] (1) -- (2) -- (3) -- (4) -- (1);
\draw[myEdge] (4) -- (5) -- (6) -- (3); 
\draw[myEdge] (5) -- (7) -- (6); 
\draw[myEdge] (7) -- (8) -- (1);
\draw[myEdge,dashed, myTip-myTip] (1) -- (3);
\draw[myEdge,dashed, myTip-myTip] (2) -- (4);
\draw[myEdge,dashed, myTip-myTip] (3) -- (5);
\draw[myEdge,dashed, myTip-myTip] (4) -- (6);
\draw[myEdge,dashed,-myTip,gray] (1) -- (7);

\node[below left] at (1){\scriptsize 1};
\node[below right] at (2){\scriptsize 2};
\node[below right] at (3){\scriptsize 3};
\node[below left] at (4){\scriptsize 4};
\node[below left] at (5){\scriptsize 5};
\node[below right] at (6){\scriptsize 6};
\node[above] at (7){\scriptsize 7};
\node[below left] at (8){\scriptsize 8};

\end{tikzpicture}
    \begin{tikzpicture}[x=.5cm, y=.7 cm, step=1] %[label distance=1pt] , baseline=0cm
%%% coords
\node[myNode](1) at (-1,0){};
\node[myNode](2) at (0,0){};
\node[myNode](3) at (1,0){};
\node[myNode](4) at (1,1){};
\node[myNode](5) at (0,1){};
\node[myNode](6) at (-1,1){};
\node[myNode](7) at (0,2){};
\node[myNode](8) at (2,1){};
\node[myNode](9) at (-2,1){};
%%% edges
\draw[myEdge] (1) -- (2) -- (3);
\draw[myEdge] (1) -- (6) -- (7); 
\draw[myEdge] (2) -- (5) -- (7); 
\draw[myEdge] (3) -- (4) -- (7); 
\draw[myEdge] (1) -- (9) -- (7); 
\draw[myEdge] (3) -- (8) -- (7);

%%% coords
\node[shift={(5, 0)}, myNode](1) at (-1,0){};
\node[shift={(5, 0)}, myNode](2) at (0,0){};
\node[shift={(5, 0)}, myNode](3) at (1,0){};
\node[shift={(5, 0)}, myNode](4) at (1,1){};
\node[shift={(5, 0)}, myNode](5) at (0,1){};
\node[shift={(5, 0)}, myNode](6) at (-1,1){};
\node[shift={(5, 0)}, myNode](7) at (0,2){};
\node[shift={(5, 0)}, myNode](8) at (2,1){};
\node[shift={(5, 0)}, myNode](9) at (-2,1){};
%%% edges
\draw[myEdge] (1) -- (2) -- (3);
\draw[myEdge] (1) -- (6) -- (7); 
\draw[myEdge] (2) -- (5) -- (7); 
\draw[myEdge] (3) -- (4) -- (7); 
\draw[myEdge] (1) -- (9) -- (7); 
\draw[myEdge] (3) -- (8) -- (7); 

\draw[myEdge, dashed, myTip-myTip] (3) -- (7); 
\draw[myEdge, dashed, myTip-myTip] (1) -- (7); 

\node[below left] at (1){\scriptsize 1};
\node[below] at (2){\scriptsize 2};
\node[below right] at (3){\scriptsize 3};
\node[below right] at (4){\scriptsize 4};
\node[below right] at (5){\scriptsize 5};
\node[below left] at (6){\scriptsize 6};
\node[above] at (7){\scriptsize 7};
\node[below right] at (8){\scriptsize 8};
\node[below left] at (9){\scriptsize 9};

\end{tikzpicture}
    \begin{tikzpicture}[x=.7cm, y=.7cm, step=1, baseline = -1cm] %[label distance=1pt]
%%% coords
\node[myNode](1) at (-1,-1){};
\node[myNode](2) at (-1,0){};
\node[myNode](3) at (-1,1){};
\node[myNode](4) at (0,-1){};
\node[myNode](5) at (0,0){};
\node[myNode](6) at (0,1){};
\node[myNode](7) at (1,-1){};
\node[myNode](8) at (1,0){};
\node[myNode](9) at (1,1){};
%%% edges
\draw[myEdge] (1) -- (2) -- (3);
\draw[myEdge] (4) -- (5) -- (6); 
\draw[myEdge] (7) -- (8) -- (9); 
\draw[myEdge] (1) -- (4) -- (7); 
\draw[myEdge] (2) -- (5) -- (8); 
\draw[myEdge] (3) -- (6) -- (9); 
\end{tikzpicture}
    \caption{The graphs described in Examples~\ref{ex:8dotted},
    \ref{ex:9positive_dim}, \ref{ex:solvable_argument}}
    \label{fig:examples_interesting}
\end{figure}
% \vspace{-.5cm}
\section{Conclusions} 
We have studied the problem of solvability of viewing graphs, presenting a
series of new theoretical results that can be applied to determine whether a
graph is solvable. We have also pointed out some open questions (particularly,
the relation between finite solvability and solvability, discussed in
Section~\ref{sec:linear}), and we hope that this paper can lead to further work
on these issues.

Our main focus here was to understand whether the camera-estimation problem
is well-posed, and we did not directly address the task of determining the
configuration computationally.  
Properly recovering a global camera configuration that is consistent with local
measurements is challenging, and is arguably the main obstacle for any
structure-from-motion algorithm. For this reason, we believe that a complete
understanding of the algebraic constraints that characterize the compatibility
of fundamental matrices would be very useful. This is an issue that has not
been considered much in classical multi-view geometry, and is very closely
related to the topic of this paper. We plan to investigate it next.
\paragraph{Acknowledgments.} This work was supported in part by the ERC grant
Video\-World, the Institut Universitaire de France, the Inria-CMU associated
team GAYA, ANR Recap, a collaboration agreement between Inria and NYU, and a
grant from the Simons Foundation \#279151.

% \bibliographystyle{splncs04}% % clean build
% \bibliography{CV}

\bibliographystyle{splncs04}% \bibliography{egbib} %splncs04

\renewcommand{\thesection}{\Alph{section}}%

\begin{appendices}

\counterwithin{lemma}{section}
\counterwithin{proposition}{section}
\counterwithin{theorem}{section}
\counterwithin{example}{section}

% \centerline{\Large{\textbf{Supplementary Material}}}

% \vspace{.6cm}

This supplementary material contains some technical discussions and examples
that were not included in the main body of the paper.

\vspace{.3cm}

\section{Solvability using algebraic geometry}

It is useful to revisit solvability from a slightly more technical viewpoint.
We let $\varphi_G: (\PP^{11})^n \dashrightarrow (\PP^8)^e$ be the rational map
associating to a set of cameras the fundamental matrices corresponding to the
edges of $G$ (so that $e = |E_G|$).\footnote{We recall that rational map
between projective spaces is a map whose coordinates are homogeneous
polynomial functions. The map is in general well-defined only on an open set
of the domain. Note that the definition of $\varphi_G$ actually requires
fixing an orientation for each edge in $G$, since a fundamental matrix
represents an \emph{ordered} pair of cameras. However, it is easy to see that
the orientations of the edges can be chosen arbitrarily.}  The map $\varphi_G$ is
rational because the coefficients of the fundamental matrix are polynomials in
the entries of the two projection matrices. The (closure of the) image of this
map is an algebraic variety in $(\PP^8)^e$ representing the set of
``compatible'' fundamental matrices that can be assigned to the edges of $G$.
We denote this set with $\mathcal X_G$ (as in the main part of the paper).

The solvability of the graph $G$ can be understood in terms of $\varphi_G$ and
$\mathcal X_G$. We will use the following general property of rational maps.

\begin{proposition} Let $f:X \to Y$ be rational map of projective varieties
over an algebraically closed field, and let $Z$ be the closure of the image of
$f$. Then there exists a (Zariski) open dense neighborhood $U$ of $Z$ such that
the fiber of $f$ over $U$ has always: 1) the same (pure) dimension $\dim X - \dim
Z$, and 2) the same number of connected components.
\end{proposition}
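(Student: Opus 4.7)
The plan is to first reduce to the case of a dominant proper morphism via a graph-closure construction, and then apply two classical tools: the theorem on the dimension of fibers and the Stein factorization.

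To set up, let $U_0 \subset X$ be the open dense subset on which $f$ is defined as a morphism, and consider the graph $\Gamma_f \subset U_0 \times Y$. Its closure $\tilde X \subset X \times Y$ is a closed subvariety of a projective variety, hence projective; the first projection $p : \tilde X \to X$ is birational and is an isomorphism over $U_0$, while the second projection gives a morphism $\tilde f : \tilde X \to Y$ whose image has closure $Z$. Restricting the codomain yields a surjective morphism $\tilde f : \tilde X \to Z$ of irreducible projective varieties, and over any $y$ in the image of $U_0$ the fiber $\tilde f^{-1}(y)$ coincides with $f^{-1}(y)$ via $p$. It therefore suffices to prove both claims for $\tilde f$ on a dense open subset of $Z$ contained in $\tilde f(U_0)$.

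For claim (1), I would invoke the classical theorem on the dimension of fibers of a dominant morphism between irreducible varieties (e.g.~\cite{shafarevich1994basic} or~\cite{grothendieck1965elements}): there is an open dense $U_1 \subset Z$ such that for every $y \in U_1$, all irreducible components of $\tilde f^{-1}(y)$ have pure dimension $\dim \tilde X - \dim Z = \dim X - \dim Z$. For claim (2), I would apply Stein factorization to the proper morphism $\tilde f : \tilde X \to Z$, writing $\tilde f = g \circ h$ with $h : \tilde X \to W$ having geometrically connected fibers and $g : W \to Z$ finite and surjective between irreducible projective varieties. In characteristic zero (which covers the intended application over $\CC$), $g$ is generically \'etale, so there is an open dense $U_2 \subset Z$ over which $g$ is a topological covering of degree $\deg g$. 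Consequently, for each $y \in U_2$ the fiber $\tilde f^{-1}(y) = h^{-1}(g^{-1}(y))$ decomposes as a disjoint union of exactly $\deg g$ connected closed subvarieties. Taking $U$ to be the intersection of $U_1$, $U_2$, and the open subset $\tilde f(U_0) \cap Z$ gives the required neighborhood.

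The main obstacle is the bookkeeping around the indeterminacy locus of $f$: one must make sure that the birational modification $p : \tilde X \to X$ does not distort the fibers of interest, which is why we further restrict to the image of $U_0$. A secondary subtlety is the invocation of generic \'etaleness in the Stein factorization step; this is automatic in characteristic zero but would require separate treatment (e.g.\ a purely set-theoretic count over an algebraically closed base) if one wanted the result in positive characteristic. Neither issue affects the intended application in this paper.
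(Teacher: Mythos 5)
Your overall strategy is workable in outline, but there is a genuine gap in the reduction step, and it matters precisely for claim (2). You assert that for every $y$ in the image of $U_0$ the fiber $\tilde f^{-1}(y)$ coincides with $f^{-1}(y)$ via $p$. This is false in general: $\tilde f^{-1}(y)$ also contains $\tilde f^{-1}(y)\cap p^{-1}(X\setminus U_0)$, the part lying over the indeterminacy locus, and the image of $p^{-1}(X\setminus U_0)$ under $\tilde f$ can be all of $Z$. Already for the projection $\PP^2\dashrightarrow\PP^1$ from a point, the graph closure is the blow-up and the exceptional curve surjects onto $\PP^1$, so \emph{every} fiber of $\tilde f$ strictly contains the corresponding fiber of $f$. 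For claim (1) this is harmless, since the fiber-dimension theorem applies directly to the dominant morphism $f|_{U_0}\colon U_0\to Z$ with no compactification needed (and one can also check that, generically, no component of $\tilde f^{-1}(y)$ lies entirely over $X\setminus U_0$). For claim (2) the gap is real: $f^{-1}(y)$ is identified with $\tilde f^{-1}(y)\setminus p^{-1}(X\setminus U_0)$, and deleting a closed subset can increase the number of connected components --- two irreducible components of $\tilde f^{-1}(y)$ might meet only over the indeterminacy locus --- so constancy of the count for the proper morphism $\tilde f$ does not formally transfer to $f$. The Stein factorization and generic-\'etaleness step is fine for $\tilde f$ itself; what is missing is an argument that, over a dense open subset of $Z$, none of the connectivity of $\tilde f^{-1}(y)$ is carried solely by $p^{-1}(X\setminus U_0)$.

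For comparison, the paper's proof is a pure citation and avoids the issue entirely: claim (1) is the theorem on dimensions of fibers in Shafarevich, and claim (2) is EGA IV, Proposition 9.7.8, which states that for any morphism of finite presentation --- in particular $f|_{U_0}\colon U_0\to Z$ itself, with no properness or compactification required --- the number of geometrically connected components of the fibers is a constructible function on the target, hence constant on a dense open subset of the irreducible $Z$. That route also works in arbitrary characteristic, removing the restriction you flag in your Stein factorization step. The cleanest repair of your write-up is therefore to drop the graph-closure reduction for claim (2) and invoke the constructibility result for $f|_{U_0}$ directly, or else to supply the missing comparison between the connected components of $\tilde f^{-1}(y)$ and those of its open subset $f^{-1}(y)$.
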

\begin{proof} The first point is~\cite[Theorem 7]{shafarevich1994basic}, the second one
is~\cite[Proposition 9.7.8]{grothendieck1965elements}.
\hfill \qed
\end{proof}

This result states that, {\em over an algebraically closed field}, pre-images of
general points will have the same dimension and the same number of connected
components. In our setting, this implies that if we view $\varphi_G$ as a map
on $(\CC\PP^{11})^n$, there exists an open dense set $U_G$ in its image that
defines its general behavior, and one of the following holds:

\begin{enumerate}
    \item the pre-image of every element in $U_G$ is a unique camera
    configuration,
    \item the pre-image of every element in $U_G$ contains a fixed finite number $r>1$ of (complex) camera configurations,
    \item the pre-image of every element in $U_G$ contains infinitely many
    (complex) camera configurations.
\end{enumerate} 

Because uniqueness over $\CC$ is stronger than uniqueness over $\RR$, we see
that necessarily if 1 occurs, the graph $G$ is solvable, and similarly if either 1 or
2 occur, $G$ is finite solvable. On the other hand, it might be possible
for $G$ to be solvable in case $2$, but this would require that exactly one of
the $r$ complex configurations in the general fiber is real.

The next result essentially shows that $G$ is finite solvable if and only
if we are in situations 1 and 2 (because if a set of fundamental is
compatible with an infinite set of complex configurations, it is also
compatible with an infinite set of real ones).

\begin{proposition} 
% If the general fiber of $\varphi_G$ contains infinitely many
% complex configurations, it contains infinitely many real configurations.
Suppose that we are in situation 3 above. Then there is a possibly
smaller open dense subset $U'_G \subset U_G$ such that every real fiber
either contains no real configurations or infinitely many real
configurations. Moreover, the latter case occurs over a Zariski dense
subset of $U_G$.
\end{proposition}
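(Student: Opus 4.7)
The plan is to combine generic smoothness of $\varphi_G$ (valid in characteristic zero) with the fact that the source $X = (\PP^{11})^n$ carries a Zariski-dense set of real points on which $\varphi_G$ restricts to a real submersion. Let $d$ denote the dimension of the generic complex fiber of $\varphi_G$, which is positive by the assumption of situation 3.

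First I would invoke generic smoothness to find a Zariski open dense subset $U'_G \subset U_G$, defined over $\RR$, such that the complex fiber $F_y = \varphi_G^{-1}(y)$ is smooth of pure complex dimension $d$ for every $y \in U'_G$. For $y \in U'_G(\RR)$ and any real preimage $x \in F_y(\RR)$, the variety $F_y$ is cut out near $x$ by real polynomial equations whose Jacobian has full rank, so the real implicit function theorem gives a neighborhood of $x$ in $F_y(\RR)$ that is a real analytic manifold of real dimension $d > 0$. In particular $F_y(\RR)$ is infinite whenever it is non-empty, which proves the dichotomy on $U'_G$.

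For the second assertion I need to show that the set of $y \in U_G$ for which $F_y(\RR)$ is non-empty is Zariski dense. The key point is that $\mathcal X_G$, being the closure of the image of the irreducible variety $X$, is itself irreducible, so any non-empty Euclidean open subset of $\mathcal X_G(\RR)$ is Zariski dense in $\mathcal X_G$. To produce such a subset inside $\varphi_G(X(\RR))$, I pick a real point $x_0 \in X(\RR)$ at which $\varphi_G$ is smooth with differential of maximal rank $\dim X - d$ and where $\varphi_G(x_0)$ lies in the smooth locus of $\mathcal X_G$; such a point exists because all the required conditions are Zariski open on $X$ and thus their intersection meets the Zariski-dense real locus $X(\RR)$. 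Since $\varphi_G$ is defined over $\RR$, the derivative $d(\varphi_G)_{x_0}$ is represented by a real matrix, so its real kernel has the same dimension as its complex kernel; a rank-nullity count then shows that the restricted real differential from $T_{x_0} X(\RR)$ to $T_{\varphi_G(x_0)} \mathcal X_G(\RR)$ is surjective. Hence $\varphi_G$ is a submersion of real manifolds in a neighborhood of $x_0$, and its image contains a Euclidean open ball in $\mathcal X_G(\RR)$ around $\varphi_G(x_0)$. Intersecting with $U'_G$ gives a non-empty Euclidean open subset of $U'_G(\RR)$ consisting of points $y$ with $F_y(\RR) \neq \emptyset$, hence infinite by the first step; this subset is Zariski dense in $U_G$.

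The main obstacle is precisely this density argument: a priori the real points of $\mathcal X_G$ could be ``thin'' relative to the complex variety, and the dichotomy would be vacuous. Here this is ruled out because $X = (\PP^{11})^n$ has a Zariski-dense set of real points and $\varphi_G$ is a real-polynomial map whose generic complex rank coincides with its generic real rank, so the real submersion argument applies and propagates real preimages to a Euclidean open, and hence Zariski dense, subset of $\mathcal X_G(\RR)$.
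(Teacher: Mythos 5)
Your argument is correct, and for the first assertion it coincides with the paper's: the paper invokes the second Bertini theorem to get smooth fibers over an open dense subset and then applies the real implicit function theorem to conclude that real points of a smooth fiber, when they exist, form a real manifold of dimension equal to the (positive) complex fiber dimension. Where you genuinely go beyond the paper is the second assertion. The paper's proof stops after the dichotomy and never justifies that fibers containing real configurations occur over a Zariski dense set of $U_G$; you supply this by choosing a real point $x_0$ of the (real-point-dense) source $(\PP^{11})^n$ at which $d\varphi_G$ has maximal rank, observing that a real matrix has the same rank over $\RR$ as over $\CC$ so that $\varphi_G$ restricts to a submersion of real manifolds near $x_0$, and concluding that $\varphi_G(X(\RR))$ contains a Euclidean open, hence Zariski dense, subset of the irreducible variety $\mathcal X_G$. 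This is exactly the missing ingredient, and it is the right one: without it the dichotomy could in principle be vacuous. One small point, shared with the paper's own proof: the statement concerns real \emph{configurations}, i.e.\ orbits under $PGL(4,\RR)$, not real camera tuples, so "infinitely many real points of the fiber" should be upgraded to "infinitely many orbits"; this follows from the observation that in situation 3 the fiber dimension $d$ strictly exceeds $15 = \dim PGL(4,\RR)$, so a $d$-dimensional real manifold cannot be covered by finitely many orbits. You may want to make that one-line dimension count explicit.
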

\begin{proof}According to the 2nd Bertini Theorem~\cite[Theorem
2]{shafarevich1994basic}, there is an open subset of the image of $\varphi_G$
over which the fibers are smooth. If these fibers have any real points, then
the implicit function theorem from multivariable calculus implies that
real points occur in dimension equal to the complex dimension.
\hfill \qed
\end{proof}

As a corollary we obtain the following result (which implies Proposition 5 in
the main part of the paper).

\begin{proposition} The graph $G$ is finite solvable if and only if $\mathcal
X_G$ has dimension $11n-15$.
\end{proposition}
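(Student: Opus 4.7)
The proof combines the trichotomy of situations 1, 2, and 3 above with the two propositions just established. By the first proposition applied to $\varphi_G$ over $\CC$, the fiber of $\varphi_G$ over a generic point of $\mathcal X_G$ has dimension $11n - \dim \mathcal X_G$. Because fundamental matrices are invariant under the diagonal action of $PGL(4)$ on the $n$ cameras (Proposition~\ref{prop:solvable_graph}) and the stabilizer of a generic $n$-tuple of cameras is trivial in $PGL(4)$ (Lemma~\ref{lemma:joint_stabilizer} applied to any two of them), each such fiber is a union of $15$-dimensional $PGL(4,\CC)$-orbits. Hence the dimension of projectively inequivalent complex configurations in a generic fiber is exactly $(11n - 15) - \dim \mathcal X_G$, which vanishes precisely when $\dim \mathcal X_G = 11n - 15$. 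In other words, $\dim \mathcal X_G = 11n - 15$ iff we are in situation 1 or 2.

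For the forward implication, suppose $\dim \mathcal X_G = 11n - 15$. Then the generic complex fiber of $\varphi_G$ contains only finitely many projectively inequivalent configurations, and a fortiori only finitely many real ones, so $G$ is finite solvable. For the converse, suppose $\dim \mathcal X_G < 11n - 15$, which puts us in situation 3. By the second proposition of this section, there is a Zariski-dense subset $V \subset \mathcal X_G$ over which every real fiber is either empty or contains infinitely many projectively inequivalent real configurations. A generic real camera configuration $(P_1,\ldots,P_n)$ maps under $\varphi_G$ into $V$, and the configuration itself witnesses that the real fiber over its image is nonempty; hence that fiber is infinite, and $G$ is not finite solvable.

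The step I expect to require the most care is the claim in the converse that a generic real configuration has its image in $V$ rather than in the exceptional complement $\mathcal X_G \setminus V$. This follows because $\varphi_G$ is defined by real polynomials and is dominant onto $\mathcal X_G$, so the image of the real configuration space has real dimension equal to $\dim_{\CC} \mathcal X_G$, whereas the real locus of the proper Zariski-closed set $\mathcal X_G \setminus V$ has strictly smaller real dimension and is therefore avoided by generic real configurations. Once this geometric fact is in place, the rest of the argument is a direct translation of the trichotomy via the fiber-dimension calculation above.
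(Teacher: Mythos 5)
Your proof is correct and follows essentially the same route as the paper: it combines the trichotomy with the fiber-dimension theorem, the observation that fibers are unions of $15$-dimensional $PGL(4)$-orbits (so situations 1 or 2 hold exactly when the generic fiber has dimension $15$, i.e.\ $\dim \mathcal X_G = 11n-15$), and the preceding proposition on real fibers to rule out finite solvability in situation 3. You merely spell out the details (in particular the passage from complex to real fibers and the genericity of the image of a real configuration) that the paper's one-line proof leaves implicit.
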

\begin{proof} It is sufficient to note that if 1 or 2 occur, the generic fiber of $\varphi_G$ has dimension $15$ in $(\PP^{11})^n$. In situation 3, the fiber has higher dimension.
\hfill \qed
\end{proof}

\vspace{.2cm}

\section{Linear equations for finite solvability}

We recall here that $\mathcal T_G({c_1,\ldots,c_n})$ was defined in the main
part of the paper as the set of $e$-tuples $(\vect g_{\lambda}, \lambda \in
E_G)$ of matrices in $GL(4,\RR)$ which satisfied compatibility equations of the
form
\begin{equation}\label{eq:inv_stab2}
\vect g_{\lambda} \vect g_{\lambda'}^{-1} = \alpha \vect I_4 + \vect c_i \vect
v^T,
\end{equation}
where $\alpha$ is an arbitrary (nonzero) constant and $\vect v$ is an arbitrary
vector. The following result will be useful.

\begin{proposition}\label{prop:M} If $\vect c = (c_0,c_1,c_2,c_3)$ is a non-zero vector, then
a matrix $\vect M = (m_{ij})_{i,j=0,\ldots,3}$ can be written in the form
$\vect M = \alpha \vect I_4 + \vect c \vect v^T$ for some arbitrary $\alpha$
and $\vect v$ if and only if the following linear expressions vanish:
\begin{small}
\begin{equation}\label{eq:linear_eqs}
\begin{array}{c}
m_{31} c_{2} -  m_{21} c_{3} \\
m_{30} c_{2} -  m_{20} c_{3} \\
m_{32} c_{1} -  m_{12} c_{3} \\
m_{30} c_{1} -  m_{10} c_{3} \\
m_{23} c_{1} -  m_{13} c_{2} \\
m_{20} c_{1} -  m_{10} c_{2} \\
m_{32} c_{0} -  m_{02} c_{3} \\
m_{31} c_{0} -  m_{01} c_{3} \\
m_{23} c_{0} -  m_{03} c_{2} \\
m_{21} c_{0} -  m_{01} c_{2} \\
m_{13} c_{0} -  m_{03} c_{1} \\
m_{12} c_{0} -  m_{02} c_{1} \\
m_{22} c_{1} -  m_{33} c_{1} -  m_{12} c_{2} + m_{13} c_{3} \\
m_{21} c_{1} -  m_{11} c_{2} + m_{33} c_{2} -  m_{23} c_{3} \\
m_{30} c_{0} -  m_{32} c_{2} -  m_{00} c_{3} + m_{22} c_{3} \\
m_{22} c_{0} -  m_{33} c_{0} -  m_{02} c_{2} + m_{03} c_{3} \\
m_{20} c_{0} -  m_{00} c_{2} + m_{33} c_{2} -  m_{23} c_{3} \\
m_{31} c_{1} -  m_{32} c_{2} -  m_{11} c_{3} + m_{22} c_{3} \\
m_{11} c_{0} -  m_{33} c_{0} -  m_{01} c_{1} + m_{03} c_{3} \\
m_{10} c_{0} -  m_{00} c_{1} + m_{33} c_{1} -  m_{13} c_{3}.
\end{array}
\end{equation}
\end{small}
\end{proposition}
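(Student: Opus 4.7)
The plan is to identify both sides as descriptions of the same $5$-dimensional linear subspace of $\RR^{4\times 4}$. Write $L = \{\alpha \vect I_4 + \vect c \vect v^T : \alpha \in \RR, \vect v \in \RR^4\}$ and let $Z$ be the common vanishing locus of the twenty linear forms \eqref{eq:linear_eqs}. Both are linear subspaces of $\RR^{16}$. Since $\vect c \ne 0$, the map $\vect v \mapsto \vect c \vect v^T$ is injective onto a $4$-dimensional subspace of rank-$\le 1$ matrices, and $\vect I_4$ (of rank $4$) lies outside it, so $\dim L = 5$. The proposition reduces to proving $L = Z$, which I will do in two steps.

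First I would verify the forward inclusion $L \subseteq Z$ by substituting $m_{ij} = \alpha \delta_{ij} + c_i v_j$ into each of the twenty expressions. The first twelve all have the form $m_{ij} c_k - m_{kj} c_i$ with $i, j, k$ pairwise distinct and $i, k \ne j$, so both entries are off-diagonal and substitution gives $c_i v_j c_k - c_k v_j c_i = 0$. The remaining eight each contain two diagonal and two off-diagonal entries; after substitution the two $\alpha$-contributions cancel against one another while the quadratic-in-$\vect c$ terms cancel against the off-diagonal contributions, which is straightforward to check term by term.

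For the reverse direction, it suffices to show $\dim Z \le 5$. I would pick $k$ with $c_k \ne 0$ and construct an injective linear map $\pi : Z \to \RR^5$; together with $L \subseteq Z$ and $\dim L = 5$, injectivity forces $L = Z$. Taking $k = 3$ (the other cases follow by the evident index-symmetry of \eqref{eq:linear_eqs}) and $\pi(\vect M) = (m_{00}, m_{30}, m_{31}, m_{32}, m_{33})$, the argument runs as follows. If $\pi(\vect M) = 0$, the column-type equations force $m_{ij} = (c_i/c_3) m_{3j} = 0$ for every off-diagonal entry in columns $0, 1, 2$. Two diagonal-involving equations then reduce respectively to $m_{22} c_3 = 0$ and $m_{11} c_3 = 0$ once the already-vanished terms are dropped, yielding $m_{11} = m_{22} = 0$. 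Finally, three more diagonal-involving equations collapse to single-term expressions in $m_{03}, m_{13}, m_{23}$ (e.g.\ the one containing $m_{20} c_0$, $m_{00} c_2$, $m_{33} c_2$, $m_{23} c_3$ reduces to $-m_{23} c_3 = 0$), forcing these last three entries to vanish.

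The main obstacle is managing the bookkeeping in the reverse direction. The twenty equations have rank only $11$, with nine syzygies — four from the fact that each column's three proportionality relations have rank $2$, and five from redundancies among the diagonal-involving equations — so it is essential to apply them in the right order, so that each step uses only entries already shown to vanish. A secondary subtlety is justifying the reduction to $c_3 \ne 0$: the twelve column-type equations range over all (column, unordered pair of rows) patterns in a manifestly symmetric way, and the block of eight diagonal-involving equations can be seen, by direct enumeration of the index patterns appearing, to be closed (up to linear combination) under simultaneous permutation of coordinates, so $Z$ carries the same $S_4$-symmetry as $L$ and the choice of nonzero coordinate is immaterial.
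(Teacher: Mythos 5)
Your proof is correct, and it takes a genuinely different route from the paper's. The paper disposes of this proposition with a computer-algebra certificate: it forms the elimination ideal $I$ of $\vect M - \alpha\vect I_4 - \vect c\vect v^T$ with respect to $\alpha$ and $\vect v$, and verifies in software that the twenty forms generate an ideal whose primary decomposition consists of $I$ together with an irrelevant component supported on $\vect c = \vect 0$. Your argument replaces this with elementary linear algebra: both sides are linear subspaces of $\RR^{4\times 4}$, the target $L$ has dimension $5$, the inclusion $L\subseteq Z$ is a two-line substitution (the twelve forms $m_{ij}c_k - m_{kj}c_i$ are off-diagonal column-proportionality minors, the remaining eight are of the shape $(m_{jj}-m_{kk})c_i - m_{ij}c_j + m_{ik}c_k$ and the $\alpha$-terms cancel), and the reverse inclusion follows from injectivity of $(m_{00},m_{30},m_{31},m_{32},m_{33})$ on $Z$ when $c_3\neq 0$, with the $S_4$-equivariance of the span of the twenty forms (which indeed holds, since the eight diagonal relations span the same space as the full permutation-closed family of twelve expressions $E_{i;\{j,k\}}$, by the cyclic syzygy $E_{i;j,k}+E_{i;k,l}+E_{i;l,j}=0$) handling the other coordinates. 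I checked the elimination order in your injectivity step and it goes through as written. What the paper's approach buys is a proof that is trivial to certify mechanically and that scales to genuinely nonlinear elimination problems; what yours buys is a human-verifiable argument that exposes the structure of \eqref{eq:linear_eqs} and gives the rank ($11$) and the syzygy count for free, which is exactly the information one needs downstream when these equations are assembled into the linear system for finite solvability. The only caveat, which applies equally to the paper's own proof, is that the equivalence holds with $\alpha$ ranging over all of $\RR$ (i.e.\ for the closure of the stabilizer family): a matrix such as $\vect c\vect v^T$ itself satisfies all twenty forms but admits no representation with $\alpha\neq 0$; this is harmless for the tangent-space computation the proposition feeds into.
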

\begin{proof} The result is easily shown using a computer algebra system.
Inside the ring $\QQ[m_{00},\ldots,m_{33},c_0,\ldots,c_3, v_0,\ldots,v_3,
\alpha]$, we consider the ideal $I$ obtained by eliminating the variables
$v_0,v_1,v_2,v_3$ and $\alpha$ from the coordinates of
$\vect M -
\alpha
\vect I_4 + \vect c \vect v^T$. We can then verify that~\eqref{eq:linear_eqs}
generate an ideal that decomposes into two prime components: one of these is
irrelevant for us (it describes the vanishing of $\vect c$) and the other one
is $I$.
\hfill \qed
\end{proof}

We can now prove Proposition 6 from the main part of the paper.

\vspace{.3cm}

\noindent {\bf Proposition 6.} \,{\em The tangent space of $\mathcal T_G({c_1,\ldots,c_n})$ at
the identity can be represented as the space of $e$-tuples of matrices $(\vect
h_{\lambda}, \, \lambda
\in E_G)$ where each $\vect h_{\lambda}$ is in $\RR^{4 \times 4}$ (not
necessary invertible), and with compatibility conditions of the form
\begin{equation}\label{eq:comp_linear2}
\vect h_{\lambda} - \vect h_{\lambda'} = \alpha \vect I_4 + \vect c_i \vect
v^T,
\end{equation} where $\alpha \in \RR \setminus \{0\}$ and $\vect v \in \RR^4$
are arbitrary, and $\lambda$ and $\lambda'$ share the vertex $i$. }

\begin{proof} 
According to Proposition~\ref{prop:M}, a matrix $\vect M$ can be written in
the form $\vect M = \alpha \vect I_4 + \vect c \vect v^T$ for some $\alpha \in
\RR \setminus \{0\}$ and $\vect v \in \RR^4$ if and only if it satisfies a set
linear equations that depend on $\vect c$. Let us write $L_{\vect c}(\vect M)
= \vect 0$ for these linear conditions (so $L_\vect c$ is a linear map). Note
that necessarily $L_{\vect c}(\vect I_4) = \vect 0$.

A constraint of the form~\eqref{eq:inv_stab2} can now be expressed as $F(\vect
g_{\lambda},\vect g_{\lambda'}) = L_{\vect c}(\vect g_{\lambda}\vect
g_{\lambda'}^{-1}) = \vect 0$. Writing the first order expansion or $F(\vect
g_{\lambda},\vect g_{\lambda'})=\vect 0$ at $(\vect I_4, \vect I_4)$
we obtain ($\approx$ denotes equality up to higher order terms)
\begin{equation}
F(\vect I_4 + \vect h_{\lambda}, \vect I_4 + \vect h_{\lambda'}) \approx
L_{\vect c}((\vect I_4 + \vect h_{\lambda})(\vect I_4 - \vect h_{\lambda'}))
\approx L_{\vect c}(\vect h_{\lambda}-\vect h_{\lambda'}).
\end{equation}
% \begin{equation}
% \frac{\partial F}{\partial \vect g_{\lambda}} |_{(\vect I_4, \vect I_4)} \vect h_{\lambda} +
% \frac{\partial F}{\partial \vect g_{\lambda'}} |_{(\vect I_4, \vect I_4)} \vect h_{\lambda'} = L_{\vect c}(\vect h_{\lambda}) - L_{\vect c}(\vect h_{\lambda'}) = L_{\vect c}(\vect h_{\lambda} - \vect h_{\lambda'}) =  \vect 0.
% \end{equation}
This shows that $L_{\vect c}(\vect h_{\lambda}-\vect h_{\lambda'}) = \vect 0$ is
the tangent space at the identity of each constraint~\eqref{eq:inv_stab2}.
By Proposition~\ref{prop:M} we have that $L_{\vect c}(\vect
h_{\lambda}-\vect h_{\lambda'}) = \vect 0$ is equivalent
to~\eqref{eq:comp_linear2}, and this concludes the proof.
% If $f(\vect A, \vect B) = \vect A \vect B^{-1}$, then 
% $\partial f/\partial \vect A =$
% \begin{equation}
% \frac{\partial f(\vect A, \vect B)}{\partial \vect A} = \vect B^{-1},
% \,\,\,\, \frac{\partial f(\vect A, \vect B)}{\partial \vect B} = \vect A \vect B^{-1} \vect B^{-1}
% \end{equation}
% Substiute $\frac{\partial f}{\partial \vect A} |_{(\vect I_4, \vect I_4)} \vect h_{\lambda} + \frac{\partial f}{\partial \vect B} |_{(\vect I_4, \vect I_4)} \vect h_{\lambda'}$
\hfill \qed
\end{proof}

Finally, we note that by substituting $\vect M = \vect h_{\lambda} - \vect
h_{\lambda'}$ inside~\eqref{eq:linear_eqs}, we obtain explicit equations that
can be used to determine finite solvability (as explained in Section 4 of the paper).

\vspace{.3cm}

\section{Examples}

\begin{example}\label{ex:four} Let $G$ be the four-cycle shown in
Figure~\ref{fig:four}. We immediately see that the graph is not solvable,
because a solvable graph with four vertices must have at least $e(4)=5$ edges.
It is however useful to understand this example algebraically. Following the
general approach described in Section 4 of the paper, we assign the
identity $\vect I_4$ to the edge $(1,2)$, and unknown matrices $\vect
g_{(2,3)}$, $\vect g_{(3,4)}$, $\vect g_{(1,4)}$ to the remaining edges. The
compatibility equations yield
\begin{equation}\label{eq:comp}
\begin{aligned}
\vect g_{(1,4)} &= \alpha_{1} \vect I_4 + \vect c_1 \vect v_{1}^T\\
\vect g_{(2,3)} &= \alpha_{2} \vect I_4 + \vect c_2 \vect v_{2}^T\\
\vect g_{(3,4)}\vect g_{(2,3)}^{-1} &= \alpha_{3} \vect I_4 + \vect c_3 \vect v_{3}^T\\
\vect g_{(3,4)}\vect g_{(1,4)}^{-1} &= \alpha_{4} \vect I_4 + \vect c_4 \vect v_{4}^T\\
\end{aligned}
\end{equation}
which imply that
\begin{equation}\label{eq:equal}
\vect g_{(3,4)} = (\alpha_{3} \vect I_4 + \vect c_3 \vect v_{3}^T)(\alpha_{2} \vect I_4 + \vect c_2 \vect v_{2}^T)
= (\alpha_{4} \vect I_4 + \vect c_4 \vect v_{4}^T)(\alpha_{1} \vect I_4 + \vect c_1 \vect v_{1}^T).
\end{equation}
Expanding~\eqref{eq:equal} we obtain
\begin{equation}\label{eq:inverse}
(\alpha_3 \alpha_2 - \alpha_4 \alpha_1) \vect I_4 = \vect c_1 \vect w_1^T + \vect c_2 \vect w_2^T + \vect c_3 \vect w_3^T + \vect c_4 \vect w_4^T,
\end{equation}
where 
\begin{equation}
\vect w_1 = \alpha_4 \vect v_{1}, \,\,\, \vect w_2 = - \alpha_3 \vect v_{2}, \,\,\, 
\vect w_3 = -(\alpha_2 \vect I_4 +  \vect v_2 \vect c_{2}^T) \vect v_3, \,\,\, \vect w_4 = (\alpha_1 \vect I_4 +  \vect v_1 \vect c_{1}^T) \vect v_4,
\end{equation}

\begin{figure}[htbp]
    \centering
    \begin{tikzpicture}[x=1.6cm, y=1.6cm, step=1] %[label distance=1pt] baseline=.4cm
%%% coords
\node[myNode](1) at (0,0){};
\node[myNode](2) at (0,1){};
\node[myNode](3) at (1,1){};
\node[myNode](4) at (1,0){};
%%% edges
\draw[myEdge] (1) -- (2) -- (3) -- (4) -- (1); 

\node[below left] at (1){\small 1};
\node[above left] at (2){\small 4};
\node[above right] at (3){\small 3};
\node[below right] at (4){\small 2};

\node[below] at (.5,0){${\vect I}_4$};
\node[right] at (1,.5){\normalsize ${\vect g}_{(2,3)}$};
\node[above] at (.5,1){\normalsize ${\vect g}_{(3,4)}$};
\node[left] at (0,.5){\normalsize ${\vect g}_{(1,4)}$};
\end{tikzpicture}
    \caption{The four-cycle from Example~\ref{ex:four}}
    \label{fig:four}
\end{figure}
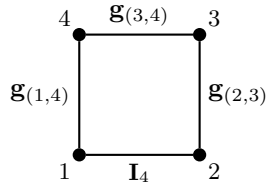

From~\eqref{eq:inverse} we see that the vectors $\vect w_i$ must be scalar
multiples of the rows of the matrix $\vect C^{-1}$ where $\vect C$ has columns
$\vect c_1, \vect c_2,\vect c_3,\vect c_4$. This easily implies that for fixed
general coefficients $\alpha_1,\alpha_2,\alpha_3, \alpha_4$, there is a unique
solution for $\vect v_1,\ldots,\vect v_4$ in~\eqref{eq:comp}. Moreover, since
the matrices $\vect g_{(i,j)}$ represent projective transformations, we can
rescale these equations so that for example $\alpha_1 = \alpha_2 = \alpha_3 =
1$. This shows that there is one degree of projective freedom corresponding to
the choice of $\alpha_4$. This projective freedom can also be explained in
different ways: for example, our analysis implies that $\vect g_{(1,4)}$ can
be any matrix of the form $\vect g_{(1,4)} = \alpha_{1} \vect I_4 + \vect c_1
\vect v_{1}^T$ where $\vect v_1 \cdot \vect c_2 = \vect v_1
\cdot \vect c_3 = \vect v_1 \cdot \vect c_4 = 0$. Not counting the scale
 factor, this gives one degree of freedom, and fixing $\vect g_{(1,4)}$ of this
 type determines all other matrices up to scale.
\hfill $\diamondsuit$
\end{example}

% \begin{equation}
% \alpha_3 \alpha_2 \vect I_4 + \alpha_2 \vect c_3 \vect v_{3}^T + \alpha_3  \vect c_2 \vect v_{2}^T +  \vect c_3 \vect v_{3}^T \vect c_2 \vect v_{2}^T = \alpha_4 \alpha_1 \vect I_4 + \alpha_1 \vect c_4 \vect v_{4}^T + \alpha_4 \vect c_1 \vect v_{1}^T +  \vect c_4 \vect v_{4}^T \vect c_1 \vect v_{1}^T
% \end{equation}
% \begin{equation}
% \begin{aligned}
% (\alpha_3 \alpha_2 - \alpha_4 \alpha_1) \vect I_4 =& \vect c_4  ( (\alpha_1 \vect I_4 +  \vect v_1 \vect c_{1}^T) \vect v_4)^T + \vect c_1 (\alpha_4 \vect v_{1}^T) +
% \\
% &\vect c_3   (-(\alpha_2 \vect I_4 +  \vect v_2 \vect c_{2}^T) \vect v_3)^T +  \vect c_2 (- \alpha_3 \vect v_{2}^T)
% \end{aligned}
% \end{equation}

\begin{example}\label{ex:grid} Let $G$ be the graph with $9$ vertices shown in
Figure~\ref{fig:grid}. It was stated in the main part of the paper that this
graph is solvable (Example 6). We can argue this fact using the analysis of
the four-cycle from the previous example. Indeed, if we assign the matrices
$\vect I_4, {\vect g}_{(4,5)}, {\vect g}_{(5,6)}, {\vect g}_{(5,8)}$ as shown
in the figure, then up to rescaling we have that ${\vect g}_{(4,5)} = \vect I_4
+
\vect c_5 \vect v_4^T$ and ${\vect g}_{(5,6)}^{-1} = \vect I_4 +
\vect c_5 \vect v_6^T$, where $\vect v_4$ and $\vect v_6$ are each determined
up to a scalar multiple (more precisely, $\vect v_4$ must satisfy $\vect c_1 \cdot
\vect v_4 =
\vect c_2 \cdot \vect v_4 = \vect c_4 \cdot \vect v_4 = 0$
while $\vect v_6$ must satisfy $\vect c_2 \cdot
\vect v_6 =
\vect c_3 \cdot \vect v_6 = \vect c_6 \cdot \vect v_6 = 0$).\footnote{Note
that the inverse of a matrix of the form $\vect I + \vect c \vect v_1^T$ is
given by $\vect I + \vect c \vect v_2^T$ where $\vect v_2 = - \frac{1}{1+\vect
c^T \vect v_1} \vect v_1$. In particular $\vect v_2$ is a scalar multiple of
$\vect v_1$.} Moreover, we can write
\begin{equation}
\begin{aligned}
\vect g_{(4,5)}\vect g_{(5,8)}^{-1} &=  \vect I_4 + \vect c_5 \vect v_8^T\\
\vect g_{(5,8)}\vect g_{(5,6)}^{-1} &= \alpha \vect I_4 + \vect c_5 \vect v_8'^T\\  
\end{aligned}
\end{equation}
where $\vect c_4 \cdot \vect v_8 = \vect c_7 \cdot \vect v_8 = \vect c_8 \cdot \vect v_8 = 0$
and $\vect c_6 \cdot \vect v_8' = \vect c_8 \cdot \vect v_8' = \vect c_9 \cdot \vect v_8' = 0$.
Multiplying these two expressions together we obtain
\begin{equation}
\vect g_{(4,5)}\vect g_{(5,6)}^{-1}  = (\vect I_4 + \vect c_5 \vect
v_8^T)(\alpha \vect I_4 + \vect c_5 \vect v_8'^T) =(\vect I_4 + \vect c_5
\vect v_4^T)(\vect I_4 + \vect c_5 \vect v_6^T),
\end{equation}
which yields
\begin{equation}
\alpha \vect I_4 + \vect c_5 (\vect v'_8 + (\alpha  + \vect c_5^T \vect v_8')\vect v_8)^T =\vect I_4 + \vect c_5(\vect v_6 + (1 + \vect c_5^T \vect v_6)\vect v_4).
\end{equation}

\begin{figure}[htbp]
    \centering
    \begin{tikzpicture}[x=1.6cm, y=1.6cm, step=1] %[label distance=1pt]
%%% coords
\node[myNode](1) at (-1,-1){};
\node[myNode](2) at (-1,0){};
\node[myNode](3) at (-1,1){};
\node[myNode](4) at (0,-1){};
\node[myNode](5) at (0,0){};
\node[myNode](6) at (0,1){};
\node[myNode](7) at (1,-1){};
\node[myNode](8) at (1,0){};
\node[myNode](9) at (1,1){};
%%% edges
\draw[myEdge] (1) -- (2) -- (3);
\draw[myEdge] (4) -- (5) -- (6); 
\draw[myEdge] (7) -- (8) -- (9); 
\draw[myEdge] (1) -- (4) -- (7); 
\draw[myEdge] (2) -- (5) -- (8); 
\draw[myEdge] (3) -- (6) -- (9); 

\node[below left] at (-1,-1){1};
\node[below left] at (0,-1){2};
\node[below left] at (1,-1){3};
\node[below left] at (-1,0){4};
\node[below left] at (0,0){5};
\node[below left] at (1,0){6};
\node[below left] at (-1,1){7};
\node[below left] at (0,1){8};
\node[below left] at (1,1){9};

\node[right] at (0,-.5){${\vect I}_4$};
\node[above] at (.5,0){${\vect g}_{(5,6)}$};
\node[above] at (-.5,0){${\vect g}_{(4,5)}$};
\node[right] at (0,.5){${\vect g}_{(5,8)}$};
\end{tikzpicture}
    \caption{The graph from Example~\ref{ex:grid}}
    \label{fig:grid}
\end{figure}
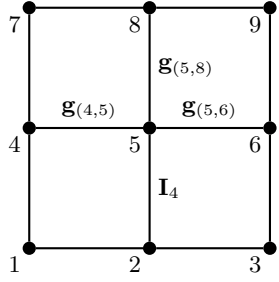

This relation can only be satisfied if $\alpha = 1$ and 
\begin{equation}\label{eq:condition}
\vect v'_8 + (1  + \vect c_5^T \vect v_8')\vect v_8 = \vect v_6 + (1 + \vect c_5^T \vect v_6)\vect v_4.
\end{equation}
Because the orthogonality conditions that defined $\vect v_4,\vect v_6, \vect v_8, \vect v_8'$ are all independent
(since the pinholes are generic), \eqref{eq:condition} can be satisfied only if these vectors are all zero. From this we deduce that all the matrices $\vect g_{(i,j)}$ must be the identity, and the graph $G$ is solvable.
\hfill $\diamondsuit$
\end{example}

\vspace{.5cm}

\begin{example}\label{ex:dotted} Let $G$ be the graph shown in
Figure~\ref{fig:dotted}. As argued in the main part of the paper (Example 4),
we can show that this graph is solvable by proving that the image of the
pinhole 1 in the view 7 is fixed by the structure of the graph. Indeed,
we note that:
\begin{enumerate}
    \item The image of the pinhole 1 in the view 5 lies on the line $l_{51}$ defined
    by reprojection of the epipole $e_{41}$ from the view 4 to the view 5.
    \item Similarly, the image of the pinhole 1 in the view 6 lies on the line $l_{61}$
    defined by the reprojection of the epipole $e_{31}$ from the view 3 to the
    view 6.
    \item Using the previous two observations, we deduce that the image of the pinhole 1
    in the view 7 lies on the line $l_{71}$ obtained by transferring
    $l_{51}$ and $l_{61}$ to the view $7$. This line is the projection from 7
    of the intersection of the planes spanned by $c_1,c_4,c_5$ and
    $c_1,c_3,c_6$.
    \item The image of the pinhole 1 in the view 7 belongs to another line $m_{71}$ that
    is the reprojection of the epipole $e_{81}$ from the view 8 to the
    view~7.
    \item For generic pinholes the lines $m_{71}$ and $l_{71}$ will be different, so
    their intersection determines the image of the pinhole 1 in the view 7.
\end{enumerate}
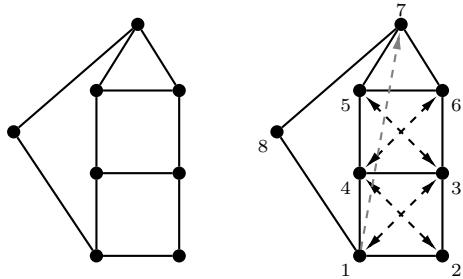
\begin{figure}[htbp]
    \centering
    \begin{tikzpicture}[x=1.1cm, y=1.1cm, step=1] %[label distance=1pt]
%%% coords
\node[myNode](1) at (0,0){};
\node[myNode](2) at (1,0){};
\node[myNode](3) at (1,1){};
\node[myNode](4) at (0,1){};
\node[myNode](5) at (0,2){};
\node[myNode](6) at (1,2){};
\node[myNode](7) at (0.5,2.8){};
\node[myNode](8) at (-1,1.5){};
%%% edges
\draw[myEdge] (1) -- (2) -- (3) -- (4) -- (1);
\draw[myEdge] (4) -- (5) -- (6) -- (3); 
\draw[myEdge] (5) -- (7) -- (6); 
\draw[myEdge] (7) -- (8) -- (1);

%%% coords
\node[shift={(3.5 cm, 0 cm)}, myNode](1) at (0,0){};
\node[shift={(3.5 cm, 0 cm)}, myNode](2) at (1,0){};
\node[shift={(3.5 cm, 0 cm)}, myNode](3) at (1,1){};
\node[shift={(3.5 cm, 0 cm)}, myNode](4) at (0,1){};
\node[shift={(3.5 cm, 0 cm)}, myNode](5) at (0,2){};
\node[shift={(3.5 cm, 0 cm)}, myNode](6) at (1,2){};
\node[shift={(3.5 cm, 0 cm)}, myNode](7) at (0.5,2.8){};
\node[shift={(3.5 cm, 0 cm)}, myNode](8) at (-1,1.5){};
%%% edges
\draw[myEdge] (1) -- (2) -- (3) -- (4) -- (1);
\draw[myEdge] (4) -- (5) -- (6) -- (3); 
\draw[myEdge] (5) -- (7) -- (6); 
\draw[myEdge] (7) -- (8) -- (1);
\draw[myEdge,dashed, myTip-myTip] (1) -- (3);
\draw[myEdge,dashed, myTip-myTip] (2) -- (4);
\draw[myEdge,dashed, myTip-myTip] (3) -- (5);
\draw[myEdge,dashed, myTip-myTip] (4) -- (6);
\draw[myEdge,dashed,-myTip,gray] (1) -- (7);

\node[below left] at (1){\scriptsize 1};
\node[below right] at (2){\scriptsize 2};
\node[below right] at (3){\scriptsize 3};
\node[below left] at (4){\scriptsize 4};
\node[below left] at (5){\scriptsize 5};
\node[below right] at (6){\scriptsize 6};
\node[above] at (7){\scriptsize 7};
\node[below left] at (8){\scriptsize 8};
\end{tikzpicture}
    \caption{The graph from Example~\ref{ex:dotted}}
    \label{fig:dotted}
\end{figure}

This argument shows that we can draw a dashed arrow from the vertex $1$ to the
vertex $7$. We can now prove that $G$ is solvable using the following sequence of
moves (which starts from the situation illustrated at the right of Figure~\ref{fig:dotted}).
\begin{itemize}
    \item Draw double dashed arrows (4,7) and (3,7) (move II).
    \item Draw a dashed arrow from 1 to 5 (move II for the four-cycle (1,4,5,7)).
    \item Make (3,5) solid (move III), and then also (4,6), and (4,7) and (3,7) (move I).
    \item Make (1,7) double dashed (move II for (1,4,7,8)).
    \item Make (1,3) solid (move III) and complete the graph using move I.
\end{itemize}
    \hfill $\diamondsuit$
\end{example}

\end{appendices}

\end{document}